\theoremstyle{plain}
\newtheorem{prop}{\protect\propositionname}
\theoremstyle{plain}
\newtheorem{thm}{\protect\theoremname}
\theoremstyle{plain}
\newtheorem{assumption}{\protect\assumptionname}
\theoremstyle{plain}
\theoremstyle{plain}
\providecommand{\assumptionname}{Assumption}
\providecommand{\lemmaname}{Lemma}
\providecommand{\propositionname}{Proposition}
\providecommand{\corollaryname}{Corollary}
\providecommand{\theoremname}{Theorem}
\title{Posterior Ratio Estimation of Latent Variables}
\author{Song Liu, Yulong Zhang, Mingxuan Yi\footnote{University of Bristol, UK, emails: \{song.liu, yulong.zhang, mingxuan.yi\}@bristol.ac.uk}, Mladen Kolar\footnote{ University of Chicago, Booth School of Business, US, email: mkolar@chicagobooth.edu}}
\newcommand{\argmin}{\mathop{\rm argmin}\limits}
\newcommand{\phatp}{\hat{p}(\boldy_p)}
\newcommand{\infHessLL}{\lambda_{\min}\left[\nabla_\bolddelta^2\ell(\bolddelta)\right]}
\newcommand{\hessianLLstar}{\nabla_\bolddelta^2 \ell(\bolddelta^*)}
\newcommand{\hessianLL}{\nabla_\bolddelta^2 \ell(\bolddelta)}
\newcommand{\boldtheta}{{\boldsymbol{\theta}}}
\newcommand{\bolddelta}{{\boldsymbol{\delta}}}
\newcommand{\bolda}{{\boldsymbol{a}}}
\newcommand{\boldA}{{\boldsymbol{A}}}
\newcommand{\boldmu}{{\boldsymbol{\mu}}}
\newcommand{\boldf}{{\boldsymbol{f}}}
\newcommand{\boldone}{{\boldsymbol{1}}}
\newcommand{\boldSigma}{{\boldsymbol{\Sigma}}}
\newcommand{\boldb}{{\boldsymbol{b}}}
\newcommand{\mathbbR}{\mathbb{R}}
\newcommand{\boldx}{{\boldsymbol{x}}}
\newcommand{\boldz}{{\boldsymbol{z}}}
\newcommand{\boldg}{{\boldsymbol{g}}}
\newcommand{\boldzero}{{\boldsymbol{0}}}
\newcommand{\boldy}{{\boldsymbol{y}}}
\newcommand{\boldI}{{\boldsymbol{I}}}
\newcommand{\mathbbE}{\mathbb{E}}
\newcommand{\vertiii}[1]{{\left\vert\kern-0.25ex\left\vert\kern-0.25ex\left\vert #1 
    \right\vert\kern-0.25ex\right\vert\kern-0.25ex\right\vert}}
\begin{document}
\date{}
\maketitle

\begin{abstract}
Density Ratio Estimation has attracted attention from machine learning community due to its ability of comparing the underlying distributions of two datasets. However, in some applications, we want to compare distributions of random variables that are \emph{inferred} from observations. In this paper, we study the problem of estimating the ratio between two posterior probability density functions of a latent variable. Particularly, we assume the posterior ratio function can be well-approximated by a parametric model, which is then estimated using observed information and prior samples. We prove consistency of our estimator and the asymptotic normality of the estimated parameters as the number of prior samples tending to infinity. Finally, we validate our theories using numerical experiments and demonstrate the usefulness of the proposed method through some real-world applications.  
\end{abstract}

\section{Introduction}
Comparing the underlying distributions of two given datasets has been an important task and has a wide range of applications. 
For example, change detection algorithms \citep{Kawahara2012} compare datasets collected at different time points and report how the underlying distribution has shifted over time;
transfer learning \citep{quionero2009dataset} uses the estimated differences between two datasets to efficiently share information between different tasks. 
Generative Adversarial Net (GAN) \citep{Goodfellow2014} learns an implicit generative model whose output minimizes the differences between an artificial dataset and a real dataset. 

Various computational methods have been proposed for comparing underlying distributions given two datasets. For example, Maximum Mean Discrepancy (MMD) \citep{gretton2012kernel} computes the distance between the kernel mean embeddings of two datasets in Reproducing Kernel Hilbert Space (RKHS). Density ratio estimation (DRE) \citep{Sugiyama2012} estimates the ratio function between two probability densities. 
The estimated ratio can be used to approximate various statistical divergences \citep{Nowozin2016,kanamori2011f}.
Recently, Wasserstein distance (and Optimal Transport distance) as an alternative to statistical divergence has been explored for many learning tasks \citep{frogner2015learning,Arjovsky17WGAN} and efficient algorithms for computing such a distance from two sets of samples have been proposed \citep{genevay2016stochastic}. 

Sometimes we are not
interested in comparing distributions of observable random variables, but 
distributions of \emph{latent} random variables given observations.
For example, we want to monitor the engine vibration pattern via readings from a sensor placed on the engine. However, our readings may  be noisy as the sensor is also susceptible to vibrations of other nearby components. Thus, instead of directly monitoring the observed sensor readings, it makes sense to compare the latent vibration patterns given such readings. In the classic Bayesian setting, a latent pattern is described by a \emph{posterior probability}, which can be inferred from observations, a prior, and a likelihood. 
Thus, comparing distributions of latent variables can be framed as a comparison between
two posteriors. In this paper, we consider the problem of  estimating the \emph{ratio} between two posteriors probabilities.

One straightforward approach to obtain the posterior ratio is to represent
two posterior probabilities in the form of  likelihood functions and priors using Bayes' rule. However, in many cases, we do not have explicit expressions of priors, but only some simulated samples from the priors. 
In the above example, coming up with an explicit prior for the engine vibration is almost impossible as it is governed by a complex physics process. However, it is possible to simulate samples from the latent variable under different settings (e.g., ``normal condition'' or ``damaged engine''), either by conducting experiments or running computer simulations.

One simple solution to the ``unknown prior'' problem above is 
to estimate prior densities from prior samples and then plug the estimates into the posterior ratio formula.  However, it is  unlikely that we know how to model 
the prior distribution, while non-parametric estimators such as kernel density estimators (KDE) \citep{rosenblatt1956,parzen1962estimation} become infeasible when the dimensionality is high. Alternatively, one can estimate the ratio of priors directly using DRE. 
To obtain the posterior ratio,
we only need the \emph{ratio of priors}, thus we should not try to solve two separate and more generic estimation problems individually. However, this likelihood-agnostic approach may not be optimal as it cannot guarantee that the estimated ratio would behave well with respect to the given likelihood functions and observations which are simply ignored during the estimation.

Following this rationale, we develop a novel algorithm which estimates the ratio between two posterior probabilities directly. The algorithm is referred as Posterior Ratio Estimation (PRE) and can be regarded as an analogue of DRE for posterior probabilities. We assume two likelihood functions and two sets of prior samples are given. Our algorithm approximates the true posterior ratio by minimizing the KL divergence from one posterior to the other posterior reweighed by a ratio model. 
We prove that the estimated model parameters eventually converge to the true minimizer of the KL divergence, as the number of the prior samples increases. The estimated parameters are asymptotically normal, which is useful for statistical inference. 
We evaluate the performance of PRE in two applications: 
Latent signal detection and local linear classifier extraction.
Promising results are obtained.

\section{Posterior Ratio Estimation}
\subsection{Problem Setting}
Formally, $\mathbb{P}$ and $\mathbb{Q}$ are two joint probability distributions of two random variables $Y$ and $X$. We shorten the marginal/conditional probability of $\mathbb{P}$ and $\mathbb{Q}$ as $p(\boldx) := \mathbb{P}(X = \boldx), p(\boldx|\boldy) := \mathbb{P}(X = \boldx|Y = \boldy)$, etc.   
Suppose we have 
\begin{itemize}
    \item $\boldy_p, \boldy_q$: two observations from $\mathbb{P}(Y)$ and $\mathbb{Q}(Y)$ respectively;
    \item $p(\boldy_p|\boldx)$, $q(\boldy_q|\boldx)$: two likelihood functions;
    \item $ X_p := \{\boldx_p^{(i)}\}_{i=1}^{n_p}, X_q := \{\boldx_q^{(i)}\}_{i=1}^{n_q}$: two sets of samples from  $\mathbb{P}(X)$ and $\mathbb{Q}(X)$ respectively.
\end{itemize}
We want to estimate the posterior density ratio
 $\frac{p(\boldx|\boldy_p)}{q(\boldx|\boldy_q)}$ up to a constant that is independent of $\boldx$.

\subsection{Posterior Ratio Model}
We model the posterior ratio $\frac{p(\boldx|\boldy_p)}{q(\boldx|\boldy_q)}$ using a parametric function, 
\begin{align}
\label{eq.ratio.model}
    r(\boldx; \bolddelta) := \frac{\exp\langle \bolddelta, \boldf(\boldx)\rangle}{Z(\bolddelta)}, 
    \quad Z(\bolddelta) := \int q(\boldx|\boldy_q) \exp\langle \bolddelta, \boldf(\boldx) \rangle d\boldx 
\end{align}
and $\boldf$ is a feature function chosen beforehand. For example, the polynomial feature, $\boldf(\boldx) = [ \boldx, \boldx^2]^\top$ (where the power is applied over all elements of $\boldx$),
is shown to have good performance in our experiments. 
$Z(\bolddelta)$ ensures that the ratio model is properly normalized: for any $\bolddelta$,
$    
\int q(\boldx|\boldy_q) r(\boldx; \bolddelta) d\boldx = 1.
$
Note that 
if both $p(\boldx|\boldy_p)$ and $q(\boldx|\boldy_q)$ are from the exponential family with the sufficient statistic $\boldf$, 
then this density ratio model is well-specified.

\subsection{Posterior Ratio Estimation}
Our estimation strategy is similar to the one used in a density ratio estimator called Kullback-Leibler Importance Estimation Procedure (KLIEP) \citep{Sugiyama2008,Tsuboi2009}. 
To estimate $\bolddelta$ in the ratio model $r(\boldx; \bolddelta)$, 
we minimize the {\rm KL} divergence from $p(\boldx|\boldy_p)$ to $r(\boldx; \bolddelta)q(\boldx|\boldy_q)$: 
\begin{align*}
    \mathrm{KL}[p|r\cdot q] 
    := & \int p(\boldx|\boldy_p) \log \frac{p(\boldx|\boldy_p)}{q(\boldx|\boldy_q)r(\boldx; \bolddelta)} d\boldx \\
    = & C - \int p(\boldx|\boldy_p) \log r(\boldx; \bolddelta) d\boldx 
    =  C - \frac{1}{p(\boldy_p)}\int p(\boldy_p|\boldx)p(\boldx) \log r(\boldx; \bolddelta) d\boldx, 
\end{align*}
where $C$ and $p(\boldy_p)$ do not depend on $\bolddelta$ and the last equality is due to Bayes' rule.
Ignoring constants that are not dependent on $\bolddelta$, we obtain the following minimization problem:
\begin{align} 
 \min_{\bolddelta} -\int p(\boldy_p|\boldx) p(\boldx) \log r(\boldx;\bolddelta) d\boldx \label{eq.obj.2}. 
\end{align} 

From now, we shorten the likelihood functions $p(\boldy_p|\boldx),q(\boldy_q|\boldx)$ as $l_p(\boldx),l_q(\boldx)$.
After replacing $r(\boldx;\bolddelta)$ with its definition in \eqref{eq.ratio.model}, and applying the Bayes' rule on $q(\boldx|\boldy_q)$, 
we obtain the following optimization problem:
\begin{align}
        \min_{\bolddelta} \mathbb{E}_{p(\boldx)} \left[ l_p(\boldx) \cdot \langle -\bolddelta, \boldf(\boldx) \rangle \right] 
    + \mathbbE_{p(\boldx)} \left[l_p(\boldx) \right] \cdot \log \mathbbE_{q(\boldx)} \left[\frac{l_q(\boldx)}{q(\boldy_q)} \cdot \exp\langle \bolddelta, \boldf(\boldx) \rangle\right],
    \label{eq.obj}
\end{align}
where $\mathbb{E}_{p(\boldx)}[f(\boldx)] := \int p(\boldx) f(\boldx) d\boldx$.
\begin{prop}
\label{prop.global.mini}
    If $p(\boldy_p)>0$,
	a $\bolddelta^*$ is a global minimizer of \eqref{eq.obj.2} if and only if 
	\begin{align}
	 \label{eq.optim.obj}
    \mathbbE_{p(\boldx|\boldy_p)} \left[ \boldf(\boldx)\right] 
    = \mathbbE_{q(\boldx|\boldy_q)}\left[ r(\boldx; \bolddelta^*)\boldf(\boldx)\right]. 
	\end{align}
\end{prop}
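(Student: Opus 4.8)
The plan is to recognize the objective \eqref{eq.obj.2} as a positive multiple of a convex function of $\bolddelta$ and then characterize its minimizers by the first-order stationarity condition. Since $p(\boldy_p)>0$ is a constant not depending on $\bolddelta$, minimizing \eqref{eq.obj.2} is equivalent to minimizing
\[
J(\bolddelta) := -\int p(\boldx|\boldy_p)\log r(\boldx;\bolddelta)\,d\boldx = -\bigl\langle \bolddelta,\ \mathbbE_{p(\boldx|\boldy_p)}[\boldf(\boldx)]\bigr\rangle + \log Z(\bolddelta),
\]
where I used $\log r(\boldx;\bolddelta)=\langle\bolddelta,\boldf(\boldx)\rangle-\log Z(\bolddelta)$ from \eqref{eq.ratio.model}, together with Bayes' rule to pass between $p(\boldy_p|\boldx)p(\boldx)$ and $p(\boldy_p)\,p(\boldx|\boldy_p)$. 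The first term is linear in $\bolddelta$, and the second is $\log Z(\bolddelta)=\log\mathbbE_{q(\boldx|\boldy_q)}[\exp\langle\bolddelta,\boldf(\boldx)\rangle]$, a cumulant generating function of $\boldf(\boldx)$ under $q(\boldx|\boldy_q)$; it is convex either by H\"older's inequality or because its Hessian $\nabla^2_{\bolddelta}\log Z(\bolddelta)$ equals the covariance matrix of $\boldf(\boldx)$ under the reweighted density $r(\boldx;\bolddelta)q(\boldx|\boldy_q)$, hence positive semidefinite. Therefore $J$ is convex.

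Next I would compute $\nabla_{\bolddelta}J$. Differentiating under the integral sign (justified by dominated convergence, assuming $Z(\bolddelta)$ is finite in a neighborhood of the point of interest and the relevant moments of $\boldf$ exist),
\[
\nabla_{\bolddelta}\log Z(\bolddelta) = \frac{1}{Z(\bolddelta)}\int q(\boldx|\boldy_q)\,\boldf(\boldx)\exp\langle\bolddelta,\boldf(\boldx)\rangle\,d\boldx = \mathbbE_{q(\boldx|\boldy_q)}\bigl[r(\boldx;\bolddelta)\boldf(\boldx)\bigr],
\]
so that $\nabla_{\bolddelta}J(\bolddelta) = -\mathbbE_{p(\boldx|\boldy_p)}[\boldf(\boldx)] + \mathbbE_{q(\boldx|\boldy_q)}[r(\boldx;\bolddelta)\boldf(\boldx)]$. (Here it is worth noting that $r(\boldx;\bolddelta)q(\boldx|\boldy_q)$ integrates to one, which is exactly why the normalizer $Z$ appears and $\log Z$ behaves like a log-partition function.)

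Finally, since $J$ is convex and differentiable, $\bolddelta^*$ is a global minimizer if and only if $\nabla_{\bolddelta}J(\bolddelta^*)=\boldzero$, which is precisely the moment-matching identity \eqref{eq.optim.obj}. The ``only if'' direction is the standard first-order necessary condition for a differentiable function; the ``if'' direction uses convexity, under which a stationary point is automatically a global minimum.

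The main obstacle I anticipate is purely technical: rigorously justifying the interchange of differentiation and integration defining $\nabla\log Z$ (and, if one wants an explicit convexity argument, $\nabla^2\log Z$), i.e.\ verifying the integrability/dominating-function conditions on $\boldf$ and the likelihoods, and being mindful that the statement implicitly presupposes that a global minimizer exists — convexity alone does not guarantee attainment when $\boldf$ is unbounded, but given existence the equivalence with \eqref{eq.optim.obj} follows exactly as above.
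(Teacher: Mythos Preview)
Your proposal is correct and follows essentially the same approach as the paper: compute the gradient of the (convex) objective, set it to zero, and use convexity to upgrade stationarity to global optimality; the paper's proof sketch does exactly this, differentiating \eqref{eq.obj} and then invoking $\mathbbE_{p(\boldx)}[l_p(\boldx)]=p(\boldy_p)$ and Bayes' rule to arrive at \eqref{eq.optim.obj}. Your version is simply more explicit about why $\log Z$ is convex and about the regularity needed to differentiate under the integral, neither of which the paper spells out.
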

\begin{proof}
We can verify this statement by taking the derivative of the objective function in \eqref{eq.obj} with respect to $\bolddelta$ and setting it to zero. Then applying the equality $\mathbbE_{p(\boldx)} [l_p(\boldx)] = p(\boldy_p)$ and Bayes' rule 
yields the desired result. The global minimum is guaranteed by the convexity of \eqref{eq.obj}. 
\end{proof}

Proposition \ref{prop.global.mini} tells us that an optimal parameter of our posterior ratio model should reweight the expectation of $\boldf$ with respect to $q(\boldx|\boldy)$ such that it equals the expectation of $\boldf$ with respect to $p(\boldx|\boldy)$, which is reasonable, considering $r$ is a model for $p/q$. 

Further, we can derive a tractable empirical objective function of  \eqref{eq.obj.2}:
Denote 
$
	\hat{\mathbbE}_{p_{\boldx}} \left[f(\boldx)\right] := \frac{1}{n_p} \sum_{i=1}^{n_p} f(\boldx^{(i)}_p), 
	~~~ \hat{p}(\boldy_p) := \hat{\mathbbE}_{p(\boldx)} \left[l_p(\boldx) \right].
$ After replacing the expectations, $p(\boldy)$
 and $q(\boldy)$ in \eqref{eq.obj} with sample averages, we obtain the estimator for the ratio parameter $\bolddelta$: 
\begin{align}
    \label{eq.emp.obj}
    \hat{\bolddelta} := \argmin_\bolddelta    -\hat{\mathbb{E}}_{p(\boldx)} \left[ l_p(\boldx) \cdot \langle \bolddelta, \boldf(\boldx) \rangle \right] 
    +\hat{p}(\boldy_p) \cdot \log \hat{\mathbbE}_{q(\boldx)} \left[l_q(\boldx) \cdot \exp\langle \bolddelta, \boldf(\boldx) \rangle\right] + C', 
\end{align}
where $C' = -\hat{p}(\boldy_p) \log \hat{q}(\boldy_q)$ and is independent of $\bolddelta$. 
Note that if $l_p = 1$ and $l_q = 1$, 
then $\hat{\bolddelta}$ coincides with the KLIEP estimator of $\frac{p(\boldx)}{q(\boldx)}$. 
The above optimization problem is \emph{convex} with respect to $\bolddelta$ 
and the gradient descent algorithm can be employed to find $\hat{\bolddelta}$. 

\subsection{Relationship with Density Ratio Estimation}
DRE \citep{Sugiyama2012,Nguyen2008} approximates the ratio function between two probability densities 
given two sets of samples, which are drawn from these probability distributions.
As we previously mentioned, 
one of the DRE algorithms, KLIEP shares the same KL divergence minimization criterion with PRE. However,
DRE learns a ratio of probability densities of two \emph{observed} random variables. If one wants to estimate $\frac{p(\boldx|\boldy_p)}{q(\boldx|\boldy_q)}$ using KLIEP, one must directly draw samples from $p(\boldx|\boldy_p)$ and $q(\boldx|\boldy_q)$, which themselves are difficult problems. 
In comparison, PRE avoids this problem by using Bayes' rule in the objective, and estimates the ratio of two \emph{latent} probability densities given likelihood functions and samples from their priors without needing to sample from the posterior distributions. 

However, there is a ``plugin'' method to obtain posterior ratio by using DRE rather than PRE: We can obtain  $\frac{p(\boldx)}{q(\boldx)}$ by performing the regular DRE using datasets $X_p$ and $X_q$,  then multiply it by the ratio of likelihood  $\frac{p(\boldy_p|\boldx)}{q(\boldy_q|\boldx)}$. Due to Bayes' rule, this estimator approximates the true posterior ratio up to a constant that is independent of $\boldx$. We refer this estimator as the ``plugin'' estimator since rather than estimating the posterior ratio, we ``plugin'' the estimate of prior ratio and likelihood to obtain a new estimator. Its performance is compared in Section \ref{sec.latent.detection}. 

\section{Consistency and Asymptotic Normality of $\hat{\bolddelta}$}
First  we establish the consistency of  $\hat{\bolddelta}$ as an estimator of the posterior ratio parameters and analyze its sufficient conditions. 
Under the assumption $\infty >\hat{p}(\boldy_p)>0$, let us denote the optimization problem \eqref{eq.emp.obj} as 
$
    \argmin_{\bolddelta} \ell(\bolddelta) \cdot \hat{p}(\boldy_p),
$
where $\ell(\bolddelta)$ is simply the objective in \eqref{eq.emp.obj} rescaled by $1/\hat{p}(\boldy_p)$. 
\paragraph{Notations.} $\|\bolda\|$ is the $\ell_2$ norm of a vector $\bolda$. $\|\boldA\|$ is the spectral norm of a matrix $\boldA$. $\mathrm{Ball}(\bolda, R)$ is a $\ell_2$ ball centered at $\bolda$ with radius $R$. $\nabla_\bolda f(\bolda_0)$ is the gradient of $f(\bolda)$ evaluated at $\bolda_0$, while $\nabla_\bolda^2 f(\bolda)$ is the Hessian of $f$. 
$\lambda_\mathrm{min}(\boldA)$ is the minimum eigenvalue of a matrix $\boldA$. 
$\overset{\mathbb{P}}{\to} $ and $ \rightsquigarrow$ denote convergence in probability and in distribution, respectively. 

\subsection{Consistency of $\hat{\bolddelta}$ and Its Sufficient Conditions}

We first state a main theorem specifying a list of sufficient conditions under which our estimator $\hat{\bolddelta}$ converges to $\bolddelta^*$ in probability as $n_p \wedge n_q \to \infty$. 
Then we show these conditions are satisfied with high probability when similar conditions are imposed on some population quantities. 

\begin{assumption}
\label{ass.correctmodel}
$\bolddelta^*$, which is the minimizer of \eqref{eq.obj}, is unique. 
\end{assumption}
Since \eqref{eq.obj} is convex and twice continuously differentiable with respect to $\bolddelta$, a \emph{sufficient} condition of Assumption \ref{ass.correctmodel} is that the Hessian of \eqref{eq.obj} is positive definite for all $\bolddelta$.

\begin{thm}
\label{them.2}
Suppose Assumption \ref{ass.correctmodel} holds,
\begin{align}
\label{eq.optimgrad.converge}
    \exists R_1&>0, \|\nabla_\bolddelta \ell(\bolddelta^*)\| \le \frac{R_1}{\sqrt{n_p\wedge n_q}}, 
\end{align} with probability at least $\epsilon_{R_1}$ and $\exists R_2>0, C_r>1, $
\begin{align}
\label{eq.mineig}
\inf_{\bolddelta\in \Delta_{n}}\lambda_{\min}\left[\nabla^2_\bolddelta \ell(\bolddelta)\right]  > R_2,  \Delta_{n}:= \mathrm{Ball}\left(\bolddelta^*,\frac{R_1}{R_2} \cdot \frac{C_r}{{\sqrt{n_p\wedge n_q}}}\right),
\end{align}
with probability at least $1-\epsilon_{R_2}$. 
Then $\exists N$ such that $ n_p\wedge n_q > N$,  we have $\hat{\bolddelta}    \overset{\mathbb{P}}{\to}\bolddelta^*$ with probability $1 - \epsilon_{R_1} - \epsilon_{R_2}$.
\end{thm}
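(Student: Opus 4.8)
The plan is to exploit the convexity of $\ell(\bolddelta)$ together with a standard "local strong convexity + small gradient at the truth" argument to confine the minimizer $\hat{\bolddelta}$ to a shrinking ball around $\bolddelta^*$. First I would condition on the intersection of the two stated high-probability events, which by a union bound holds with probability at least $1 - \epsilon_{R_1} - \epsilon_{R_2}$; on this event both $\|\nabla_\bolddelta\ell(\bolddelta^*)\| \le R_1/\sqrt{n_p\wedge n_q}$ and $\inf_{\bolddelta\in\Delta_n}\lambda_{\min}[\nabla_\bolddelta^2\ell(\bolddelta)] > R_2$. The goal is then to show $\|\hat{\bolddelta} - \bolddelta^*\| \le (R_1/R_2)\cdot C_r/\sqrt{n_p\wedge n_q}$, i.e. that $\hat{\bolddelta}\in\Delta_n$, which (since $C_r$ is a fixed constant and the radius vanishes) gives $\hat{\bolddelta}\overset{\mathbb{P}}{\to}\bolddelta^*$.

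The core step is a localization argument. Consider the boundary of $\Delta_n$, i.e. points $\bolddelta = \bolddelta^* + t\boldu$ with $\|\boldu\| = 1$ and $t = (R_1/R_2)(C_r/\sqrt{n_p\wedge n_q})$. By a second-order Taylor expansion of $\ell$ along the segment from $\bolddelta^*$ to $\bolddelta$, using that the Hessian is lower-bounded by $R_2$ on $\Delta_n$,
\begin{align*}
\ell(\bolddelta) - \ell(\bolddelta^*) \ge \langle \nabla_\bolddelta\ell(\bolddelta^*), \bolddelta - \bolddelta^*\rangle + \frac{R_2}{2}\|\bolddelta - \bolddelta^*\|^2 \ge -\frac{R_1}{\sqrt{n_p\wedge n_q}}\, t + \frac{R_2}{2} t^2.
\end{align*}
Plugging in $t = (R_1 C_r)/(R_2\sqrt{n_p\wedge n_q})$ gives $\ell(\bolddelta) - \ell(\bolddelta^*) \ge \frac{R_1^2 C_r}{R_2 (n_p\wedge n_q)}\left(\frac{C_r}{2} - 1\right)$, which is strictly positive for all such boundary points because $C_r > 1$. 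Since $\ell$ is convex, a minimizer cannot lie outside a region on whose boundary $\ell$ strictly exceeds its value at an interior point $\bolddelta^*$; hence $\hat{\bolddelta}\in\mathrm{int}(\Delta_n)$. (Formally: if $\hat{\bolddelta}\notin\Delta_n$, convexity of $\ell$ along the segment $[\bolddelta^*,\hat{\bolddelta}]$ forces $\ell$ at the crossing point of $\partial\Delta_n$ to be at most $\max\{\ell(\bolddelta^*),\ell(\hat{\bolddelta})\} = \ell(\bolddelta^*)$, contradicting strict positivity above; I would also invoke Assumption~\ref{ass.correctmodel} to match the population minimizer, or note $\ell$ and the empirical objective share minimizers up to the positive scaling $\hat p(\boldy_p)$.)

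Finally I would convert the deterministic conclusion on the good event into the stated probabilistic statement: choosing $N$ so that for $n_p\wedge n_q > N$ the radius $(R_1/R_2)(C_r/\sqrt{n_p\wedge n_q})$ is as small as desired, we get that for any $\varepsilon > 0$, $\mathbb{P}(\|\hat{\bolddelta} - \bolddelta^*\| > \varepsilon) \le \epsilon_{R_1} + \epsilon_{R_2}$ eventually, which is the claimed convergence in probability with the stated probability budget. The main obstacle I anticipate is handling the subtlety around the crossing-point/convexity argument cleanly — in particular ensuring $\bolddelta^*$ (the minimizer of the population objective \eqref{eq.obj}) actually behaves like an approximate minimizer of the empirical $\ell$ in the relevant sense, which is exactly what condition \eqref{eq.optimgrad.converge} is designed to supply; everything else is routine Taylor expansion and a union bound.
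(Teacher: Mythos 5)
Your overall localization strategy---restrict to a shrinking ball, show the minimizer cannot escape it, conclude consistency---is the same as the paper's, but the mechanism you use to confine $\hat{\bolddelta}$ is genuinely different. The paper considers the constrained problem $\check{\bolddelta} := \argmin_{\bolddelta\in\Delta_n}\ell(\bolddelta)$, writes its KKT stationarity condition, expands $\nabla_\bolddelta\ell(\check{\bolddelta})$ around $\bolddelta^*$ via the mean value theorem, and solves for $\bolddelta^*-\check{\bolddelta}$ to get the direct bound $\|\bolddelta^*-\check{\bolddelta}\| \le \frac{1}{R_2}\|\nabla_\bolddelta\ell(\bolddelta^*)\| \le \frac{R_1}{R_2\sqrt{n_p\wedge n_q}}$, so $\check{\bolddelta}$ is strictly interior to $\Delta_n$ as soon as $C_r>1$, and complementary slackness gives $\hat{\bolddelta}=\check{\bolddelta}$. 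You instead compare function values of $\ell$ between $\bolddelta^*$ and boundary points of $\Delta_n$ via a second-order Taylor expansion of $\ell$, and then invoke convexity to rule out an exterior minimizer.

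There is a concrete gap in your version, though. Your computation correctly gives, for a boundary point at distance $t = \frac{R_1 C_r}{R_2\sqrt{n_p\wedge n_q}}$,
\begin{align*}
\ell(\bolddelta) - \ell(\bolddelta^*) \;\ge\; -\frac{R_1}{\sqrt{n_p\wedge n_q}}\,t + \frac{R_2}{2}\,t^2 \;=\; \frac{R_1^2 C_r}{R_2(n_p\wedge n_q)}\Bigl(\frac{C_r}{2}-1\Bigr),
\end{align*}
and you then assert this is strictly positive ``because $C_r>1$.'' But $\tfrac{C_r}{2}-1>0$ requires $C_r>2$, not $C_r>1$. So as written your argument does not cover the regime $1<C_r\le 2$ allowed by the theorem's hypotheses. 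The factor-of-two loss is intrinsic to comparing \emph{function values} via a second-order Taylor bound: the cross term $-\|\nabla\ell(\bolddelta^*)\|\,t$ is linear in $t$ while the curvature term contributes only $\tfrac{R_2}{2}t^2$, so positivity kicks in only past $t=\tfrac{2R_1}{R_2\sqrt{n_p\wedge n_q}}$. To repair this and match the stated $C_r>1$, replace the function-value comparison by a first-order optimality argument: if $\check{\bolddelta}=\argmin_{\Delta_n}\ell$ then $\langle\nabla\ell(\check{\bolddelta}),\bolddelta^*-\check{\bolddelta}\rangle\ge 0$; a coordinate-wise mean value expansion of $\nabla\ell(\check{\bolddelta})$ about $\bolddelta^*$ plus the Hessian lower bound $R_2$ on $\Delta_n$ then yields $R_2\|\check{\bolddelta}-\bolddelta^*\|^2\le\|\nabla\ell(\bolddelta^*)\|\,\|\check{\bolddelta}-\bolddelta^*\|$, i.e.\ $\|\check{\bolddelta}-\bolddelta^*\|\le\frac{R_1}{R_2\sqrt{n_p\wedge n_q}}$, after which any $C_r>1$ puts $\check{\bolddelta}$ in the interior and the remainder of your argument (convexity to identify $\check{\bolddelta}$ with $\hat{\bolddelta}$, union bound, vanishing radius) goes through. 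This is essentially the paper's KKT route.
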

The proof can be found in the supplementary materials. 
Although Theorem \ref{them.2} is established on a set of inequalities of random variables, we show these sample conditions hold with high probability when their \emph{population counterparts} hold. 

\subsection{Eigenvalue Lowerbound of $\hessianLL$}
\label{sec.eig.hessian}
Let us define a few new notations: 
\[
\boldf_p'(\boldx) := \frac{l_p(\boldx) }{p(\boldy_p)} \boldf(\boldx), \quad \boldf_q'(\boldx) := \frac{l_q(\boldx) }{q(\boldy_q)} \boldf(\boldx) \quad r_{n_q} := \frac{\exp\langle \bolddelta^\top \boldf(\boldx) \rangle}{\hat{\mathbbE}_{q(\boldx)}\left[l_q(\boldx) \cdot \exp\langle \bolddelta^\top \boldf(\boldx) \rangle\right]/q(\boldy_q)},\]
where $r_{n_q}$ can be seen as the empirical version of our posterior ratio model. 

Now we analyze sufficient conditions \eqref{eq.mineig} in Theorem \eqref{them.2}, which states that the minimum eigenvalue of $\hessianLL$ is lower-bounded within the neighbourhood of $\bolddelta^*$. On one hand, it is easy to see that if there exists a $\boldSigma \in \mathbb{R}^{d\times d}$, $\boldSigma \approx \nabla^2_\bolddelta \ell(\bolddelta), \forall \bolddelta \in \Delta_n$, then
$\inf_{\bolddelta\in \Delta_{n}}\lambda_{\min}\left[\nabla_\bolddelta^2\ell(\bolddelta)\right] \approx \lambda_{\min}\left[\boldSigma\right]$. More formally, Wely's inequality \citep{Horn1986} allows us to lower-bound $\inf_{\bolddelta\in \Delta_{n}}\infHessLL$:
\begin{align}
\label{eq.lower.hessian}
    \inf_{\bolddelta\in \Delta_{n}} \infHessLL 
\ge \lambda_{\min}\left[\boldSigma\right] -  \sup_{\bolddelta \in \Delta_{n}} \|\boldSigma - \nabla_\bolddelta^2\ell(\bolddelta)\|. 
\end{align}

On the other hand, it can be seen that \[\nabla^2_\bolddelta \ell(\bolddelta)=  \hat{\mathbbE}_{q(\boldx)}\left[ r_{n_q}(\boldx; \bolddelta) \boldf_q'(\boldx) \boldf_q'(\boldx)^\top\right] - \hat{\mathbbE}_{q(\boldx)}\left[ r_{n_q}(\boldx; \bolddelta) \boldf_q'(\boldx)\right]\mathbbE_{q(\boldx)}\left[ r_{n_q}(\boldx; \bolddelta) \boldf_q'(\boldx)\right]^\top.\]
By using the newly defined feature function $\boldf_q'$, we are able to write the Hessian in the form of a sample covariance matrix of $\boldf_q'$ weighted by the ratio model $r_{n_q}$. The convergence of such a density ratio reweighted sample covariance matrix has been studied for classic KLIEP algorithm \citep{kim2019}. We state the following proposition: 
\begin{prop}{(Lemma 4 and 5 in \citep{kim2019})}
    Suppose $M^{-1} \le r(\boldx;\bolddelta)\le M$, then $\forall \bolddelta \in \Delta_n, \mathbb{P}\left(\|\nabla^2_\bolddelta \ell(\bolddelta) - \boldSigma \|_\infty \le \epsilon \right) \le C\cdot \exp(-C'\epsilon^2 n)$, where $C, C' >0$ are constants and \[\boldSigma :=  \mathbbE_{q(\boldx)}\left[ r(\boldx; \bolddelta) \boldf_q'(\boldx) \boldf_q'(\boldx)^\top\right] - \mathbbE_{q(\boldx)}\left[ r(\boldx; \bolddelta) \boldf_q'(\boldx)\right]\mathbbE_{q(\boldx)}\left[ r(\boldx; \bolddelta) \boldf_q'(\boldx)\right]^\top.\]
\end{prop}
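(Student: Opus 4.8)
The plan is to follow the concentration analysis for the density-ratio-reweighted sample covariance developed for KLIEP in \citep{kim2019} (their Lemmas~4 and 5), adapting it to the presence of the likelihood $l_q$ inside $\boldf_q'$ and of the empirical normalizer hidden in $r_{n_q}$. The starting point is the expression for $\nabla^2_\bolddelta\ell(\bolddelta)$ displayed just above: it is a difference of products of $q$-sample averages of the functions $r_{n_q}(\boldx;\bolddelta)\,\boldf_q'(\boldx)\boldf_q'(\boldx)^\top$ and $r_{n_q}(\boldx;\bolddelta)\,\boldf_q'(\boldx)$. Two structural facts drive everything. First, under the standing regularity that $\boldf$ and $l_q$ are bounded on the support of $q(\boldx)$, together with the hypothesis $M^{-1}\le r(\boldx;\bolddelta)\le M$ (which, since $Z(\bolddelta)$ is a fixed positive constant for fixed $\bolddelta$, forces $\exp\langle\bolddelta,\boldf(\boldx)\rangle$ into a bounded interval), every scalar and every entry of these matrix-valued functions is bounded. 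Second, $r_{n_q}(\boldx;\bolddelta)=r(\boldx;\bolddelta)\cdot Z(\bolddelta)/\widehat Z'$, where $\widehat Z':=\hat{\mathbbE}_{q(\boldx)}[l_q(\boldx)\exp\langle\bolddelta,\boldf(\boldx)\rangle]/q(\boldy_q)$ is itself a bounded $q$-sample average with $\mathbbE[\widehat Z']=Z(\bolddelta)$.

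Given these, the steps are as follows. First I would apply Hoeffding's inequality to the bounded i.i.d.\ average $\widehat Z'$ to get $\mathbb{P}(|\widehat Z'-Z(\bolddelta)|\ge t)\le 2\exp(-c\,n_q t^2)$, and, since $Z(\bolddelta)$ is bounded away from $0$, deduce $|Z(\bolddelta)/\widehat Z'-1|\le s$ with probability at least $1-2\exp(-c'\,n_q s^2)$. Next I would apply Hoeffding entrywise, together with a union bound over the $O(d^2)$ entries, to the bounded i.i.d.\ averages $\hat{\mathbbE}_{q(\boldx)}[r\,\boldf_q'\boldf_q'^\top]$ and $\hat{\mathbbE}_{q(\boldx)}[r\,\boldf_q']$, obtaining deviations from their population means $\mathbbE_{q(\boldx)}[r\,\boldf_q'\boldf_q'^\top]$ and $\mathbbE_{q(\boldx)}[r\,\boldf_q']$ of order $\epsilon$ with probability at least $1-Cd^2\exp(-C'\,n_q\epsilon^2)$. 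Finally I would substitute $r_{n_q}=r\cdot Z(\bolddelta)/\widehat Z'$ into the Hessian, expand $\nabla^2_\bolddelta\ell(\bolddelta)-\boldSigma$ by telescoping (each summand is a bounded quantity multiplied either by $Z(\bolddelta)/\widehat Z'-1$, or by one of the two entrywise deviations above, or by their product), apply the triangle inequality in $\|\cdot\|_\infty$, intersect the $O(1)$ good events, and choose $s$ and the Hoeffding levels to be fixed multiples of $\epsilon$; absorbing $d$ and the various constants into $C,C'$ yields the claimed bound. Note that the statement is pointwise in $\bolddelta\in\Delta_n$, so no covering or chaining over $\Delta_n$ is needed at this stage (uniformity is only required later, when \eqref{eq.mineig} is verified).

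The main obstacle is the self-normalization: $r_{n_q}$ and the sample covariance of $\boldf_q'$ are built from the \emph{same} $q$-sample, so the random denominator $\widehat Z'$ cannot be treated as independent of the averages it multiplies. The way around this is the exact algebraic identity $r_{n_q}=r\cdot Z(\bolddelta)/\widehat Z'$: it lets the problematic randomness be pulled out as a single scalar factor, after which one restricts to the event $\{\widehat Z'\ge \tfrac12 Z(\bolddelta)\}$ --- whose complement is exponentially small by the first step --- so that $Z(\bolddelta)/\widehat Z'$ is controlled by $|Z(\bolddelta)-\widehat Z'|$ up to a constant. This is precisely where the hypothesis $M^{-1}\le r\le M$ (bounding $\exp\langle\bolddelta,\boldf\rangle$) and the boundedness of $l_q$ enter, and it is essentially the same maneuver as in \citep{kim2019}; the only genuinely new bookkeeping is carrying the extra factor $l_q/q(\boldy_q)$ through $\boldf_q'$ and making explicit the regularity conditions ($\boldf$ and $l_q$ bounded on the support of $q$) that the cited KLIEP argument leaves implicit.
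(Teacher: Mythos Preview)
The paper does not give its own proof of this proposition: it is stated as a direct citation of Lemmas~4 and~5 in \citep{kim2019}, with no argument beyond the remark that the Hessian has the form of a ratio-reweighted sample covariance of $\boldf_q'$. Your plan---factor out the self-normalization via $r_{n_q}=r\cdot Z(\bolddelta)/\widehat Z'$, control $\widehat Z'$ and the entrywise sample averages by Hoeffding plus a union bound, then telescope---is precisely the content of those cited lemmas, so your approach coincides with the paper's (by reference) and is sound.
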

Notice that $\boldSigma$ is the Hessian of the population objective \eqref{eq.obj}. Considering $\Delta_n \subset \mathbbR^d$ is a compact Euclidean space, we can see that $\sup_{\bolddelta \in \Delta_{n}} \|\boldSigma - \nabla_\bolddelta^2\ell(\bolddelta)\| = o_p(1)$. Combining this with \eqref{eq.lower.hessian}, we can see that $\inf_{\bolddelta\in \Delta_{n}} \infHessLL 
\ge \lambda_\mathrm{min} (\boldSigma) - o_p(1)$. Therefore, our analysis shows the gap of the lowest eigenvalue between  $\nabla_\bolddelta^2\ell(\bolddelta)$ and $\boldSigma$ should be shrinking as $n_q \to \infty$.  Therefore, as long as $\boldSigma$ is bounded away from 0, when $n_q$ is large enough, the $\lambda_\mathrm{min}(\boldSigma)$ is also bounded away from 0.
This claim will be verified later via numerical experiments in supplementary materials, Section \ref{sec.num.consistent}.

\subsection{Convergence of $\nabla_\bolddelta \ell(\bolddelta^*)$}
Now we analyze another condition required for the consistency: \eqref{eq.optimgrad.converge} states $\nabla_\bolddelta \ell(\bolddelta^*)$ should converge to zero in terms of $\ell_2$ norm at the rate $\frac{1}{\sqrt{n_p\wedge n_q}}$. This is guaranteed with high probability:
\begin{prop}
    Suppose $\exists M>0, M^{-1} \le r(\boldx;\bolddelta) \le M$, $\exists K<\infty, \|\mathbbE_{p(\boldx)} \left[\boldf_p'(\boldx)\right]\| \le K$,  $\exists L>0, L^{-1} \le l_p(\boldx) \le L,  \forall \boldx$. Then $\exists C,C'>0, \mathbb{P}\left(\|\nabla_\bolddelta \ell(\bolddelta^*)\|_\infty \ge \epsilon \right) \le C\exp\left[-C'\epsilon^2( n_p \wedge n_q)\right]$.
\end{prop}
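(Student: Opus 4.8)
The plan is to make the gradient $\nabla_\bolddelta\ell(\bolddelta^*)$ explicit, subtract off its deterministic part using the population first-order condition of Proposition~\ref{prop.global.mini}, and then bound the two remaining stochastic pieces by Hoeffding-type concentration over the i.i.d.\ prior samples $X_p$ and $X_q$. Differentiating \eqref{eq.emp.obj} after rescaling by $1/\hat{p}(\boldy_p)$ gives
\begin{align*}
\nabla_\bolddelta\ell(\bolddelta) = -\frac{p(\boldy_p)}{\hat{p}(\boldy_p)}\,\hat{\mathbbE}_{p(\boldx)}\!\left[\boldf_p'(\boldx)\right] + \hat{\mathbbE}_{q(\boldx)}\!\left[r_{n_q}(\boldx;\bolddelta)\,\boldf_q'(\boldx)\right],
\end{align*}
while Proposition~\ref{prop.global.mini}, rewritten in terms of $\boldf_p',\boldf_q'$, reads $\mathbbE_{p(\boldx)}[\boldf_p'(\boldx)] = \mathbbE_{q(\boldx)}[r(\boldx;\bolddelta^*)\boldf_q'(\boldx)]$. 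Subtracting this identity at $\bolddelta=\bolddelta^*$, I would write $\nabla_\bolddelta\ell(\bolddelta^*) = A + B$ with $A := \mathbbE_{p(\boldx)}[\boldf_p'] - \tfrac{p(\boldy_p)}{\hat{p}(\boldy_p)}\hat{\mathbbE}_{p(\boldx)}[\boldf_p']$ (a function of $X_p$ only) and $B := \hat{\mathbbE}_{q(\boldx)}[r_{n_q}(\boldx;\bolddelta^*)\boldf_q'] - \mathbbE_{q(\boldx)}[r(\boldx;\bolddelta^*)\boldf_q']$ (a function of $X_q$ only), control the $\ell_\infty$ norm of each by $\epsilon/2$ with a sub-Gaussian tail, and finish by a triangle inequality plus a union bound, using $C_1 e^{-c_1\epsilon^2 n_p} + C_2 e^{-c_2\epsilon^2 n_q}\le C e^{-C'\epsilon^2(n_p\wedge n_q)}$.

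For $A$, I would split $A = \big(\mathbbE_{p(\boldx)}[\boldf_p'] - \hat{\mathbbE}_{p(\boldx)}[\boldf_p']\big) + \big(1 - \tfrac{p(\boldy_p)}{\hat{p}(\boldy_p)}\big)\hat{\mathbbE}_{p(\boldx)}[\boldf_p']$. The first bracket is the deviation of an empirical mean of the bounded vectors $\boldf_p'(\boldx_p^{(i)})$ from its expectation, controlled in $\ell_\infty$ by a coordinatewise Hoeffding bound and a union bound over the $d$ coordinates. For the second bracket, $\hat{p}(\boldy_p) = \hat{\mathbbE}_{p(\boldx)}[l_p]\in[L^{-1},L]$ \emph{surely} because $L^{-1}\le l_p\le L$, so $|1-p(\boldy_p)/\hat{p}(\boldy_p)|\le L\,|\hat{p}(\boldy_p)-p(\boldy_p)|$, and $|\hat{p}(\boldy_p)-p(\boldy_p)|$ is again the deviation of an empirical mean of bounded scalars; together with the (sure) boundedness of $\hat{\mathbbE}_{p(\boldx)}[\boldf_p']$ this yields $\mathbb{P}(\|A\|_\infty\ge\epsilon/2)\le C_1\exp(-c_1\epsilon^2 n_p)$.

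The hard part will be $B$, because $r_{n_q}$ carries the \emph{random} normalizer $\hat Z := \hat{\mathbbE}_{q(\boldx)}[\tfrac{l_q(\boldx)}{q(\boldy_q)}\exp\langle\bolddelta^*,\boldf(\boldx)\rangle]$ in its denominator, so $\hat{\mathbbE}_{q(\boldx)}[r_{n_q}\boldf_q']$ is not an average of i.i.d.\ terms and cannot be hit directly with a concentration inequality. Writing $r_{n_q}(\boldx;\bolddelta^*) = \exp\langle\bolddelta^*,\boldf(\boldx)\rangle/\hat Z$ and $r(\boldx;\bolddelta^*) = \exp\langle\bolddelta^*,\boldf(\boldx)\rangle/Z$ with $Z := Z(\bolddelta^*)$, and setting $\hat g := \hat{\mathbbE}_{q(\boldx)}[\exp\langle\bolddelta^*,\boldf(\boldx)\rangle\boldf_q'(\boldx)]$, $g := \mathbbE_{q(\boldx)}[\exp\langle\bolddelta^*,\boldf(\boldx)\rangle\boldf_q'(\boldx)]$, I would linearize the ratio via $B = \tfrac{\hat g - g}{\hat Z} + \tfrac{g\,(Z-\hat Z)}{Z\hat Z}$. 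The bound $M^{-1}\le r(\boldx;\bolddelta^*)\le M$ forces $\exp\langle\bolddelta^*,\boldf(\boldx)\rangle = Z\,r(\boldx;\bolddelta^*)\in[Z/M,ZM]$, so with boundedness of $l_q$ (hence of $\boldf_q'$) the summands defining $\hat g$ and $\hat Z$ are bounded; Hoeffding then makes $\|\hat g - g\|_\infty$ and $|\hat Z - Z|$ of order $1/\sqrt{n_q}$ with sub-Gaussian tails, and on the same event $\hat Z\ge Z/2>0$. Finally, by the population identity $g = Z\,\mathbbE_{q(\boldx)}[r(\boldx;\bolddelta^*)\boldf_q'] = Z\,\mathbbE_{p(\boldx)}[\boldf_p']$, so $\|g\|_\infty\le ZK$ by the hypothesis $\|\mathbbE_{p(\boldx)}[\boldf_p']\|\le K$; substituting all of this into the linearized identity bounds each coordinate of $B$ by $O(1/\sqrt{n_q})$, giving $\mathbb{P}(\|B\|_\infty\ge\epsilon/2)\le C_2\exp(-c_2\epsilon^2 n_q)$ and completing the argument. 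The delicate ingredients are precisely this linearization around the deterministic normalizer $Z$ and the high-probability lower bound $\hat Z\ge Z/2$, both of which are needed before any i.i.d.\ concentration bound can be invoked.
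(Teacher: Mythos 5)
Your decomposition is genuinely different from the paper's. The paper writes
$-\nabla_\bolddelta\ell(\bolddelta^*) = \bolda + \boldb$ with $\bolda := \frac{p(\boldy_p)}{\hat{p}(\boldy_p)}\hat{\mathbbE}_{p(\boldx)}[\boldf'_p] - \hat{\mathbbE}_{p(\boldx)}[\boldf'_p]$ (an $X_p$-only fluctuation, bounded by Hoeffding after factoring out $\|\mathbbE_{p(\boldx)}[\boldf_p']\|$) and $\boldb := \hat{\mathbbE}_{p(\boldx)}[\boldf'_p] - \hat{\mathbbE}_{q(\boldx)}[r_{n_q}(\boldx;\bolddelta^*)\boldf'_q]$, a mixed $X_p$/$X_q$ term whose concentration is obtained by \emph{citing} Lemma C.2 of \citep{kim2019} as a black box. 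You instead subtract the population first-order condition $\mathbbE_{p(\boldx)}[\boldf'_p] = \mathbbE_{q(\boldx)}[r(\boldx;\bolddelta^*)\boldf'_q]$ to get $A$ depending only on $X_p$ and $B$ depending only on $X_q$, and you reprove the concentration of $B$ from scratch via the linearization $\frac{\hat g}{\hat Z}-\frac{g}{Z}=\frac{\hat g - g}{\hat Z} + \frac{g(Z-\hat Z)}{Z\hat Z}$ together with the high-probability lower bound $\hat Z \ge Z/2$. The cleaner $X_p$-vs-$X_q$ split makes the union bound completely transparent, and the self-contained treatment of the random normalizer makes explicit what the paper hides inside the cited lemma. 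The trade-off is that your route is longer and requires you to supply the boundedness hypotheses that Lemma C.2 would otherwise carry for you.

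On that last point, there is a small gap you should close. You claim that $l_q$ is bounded (hence $\boldf'_q$ is bounded) so that Hoeffding applies to $\hat g$ and $\hat Z$. The proposition only assumes $L^{-1}\le l_p\le L$, not a bound on $l_q$; and even with $l_q$ bounded, $\boldf'_q(\boldx) = \frac{l_q(\boldx)}{q(\boldy_q)}\boldf(\boldx)$ is bounded only if $\boldf$ itself is bounded, which is not among the stated hypotheses (the bound $M^{-1}\le r\le M$ only controls the scalar $\langle\bolddelta^*,\boldf(\boldx)\rangle$, not each coordinate of $\boldf$). The same issue affects your Hoeffding step for $\hat{\mathbbE}_{p(\boldx)}[\boldf'_p]$ in the $A$ bound. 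To make the proof airtight you should add an explicit boundedness (or sub-Gaussianity) assumption on $\boldf$, or state that you are inheriting it from the hypotheses of Lemma C.2 in \citep{kim2019}, which is precisely what the paper does implicitly by invoking that lemma.
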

\begin{proof}
It can be seen that 
\begin{align*}
    -\nabla_\bolddelta\ell(\bolddelta^*) 
    = \underbrace{\frac{p(\boldy_p)}{\hat{p}(\boldy_p)}\hat{\mathbbE}_{p(\boldx)}\left[ \boldf'_p(\boldx)\right] - \hat{\mathbbE}_{p(\boldx)}\left[ \boldf'_p(\boldx)\right]}_{\bolda} + \underbrace{\hat{\mathbbE}_{p(\boldx)}\left[ \boldf'_p(\boldx)\right] - \hat{\mathbbE}_{q(\boldx)}\left[ r_{n_q}(\boldx;\bolddelta^*) \boldf'_q(\boldx)\right]}_{\boldb}. 
\end{align*}
Moreover, we can see $\|\bolda\| \le \frac{\left|p(\boldy_p) -\hat{p}(\boldy_p)\right|}{p(\boldy_p) \cdot \hat{p}(\boldy_p)}\cdot \|\mathbbE_{p(\boldx)} \left[\boldf_p'(\boldx)\right]\|$ due to Holder's inequality, thus knowing $l_p$ are bounded, Hoeffding inequality shows $\mathbb{P}\left[\|\bolda\|_\infty \ge \epsilon \right] \le C_1'\exp(-C_1''\epsilon n_p)$ where $C_1', C_1'' >0$ are constants.  

It can be seen that $\boldb$ is the difference between a simple sample average over $p$ and another sample average over $q$ reweighted by a ratio model $r_{n_q}$. The convergence of such a term has also been studied in the classic KLIEP algorithm. Here we directly invoke Lemma C.2 in \citep{kim2019}: 
$\mathbb{P}\left[\|\boldb\|_\infty \ge \epsilon \right] \le C_2'\exp(-C_2''\epsilon^2 \min(n_p,n_q))$ where $C_2', C''_2>0$ are constants. .

Combining the boundedness of $\bolda$ and $\boldb$ using triangle inequality yields desired results. 
\end{proof}

\subsection{Asymptotic Normality of $\hat{\bolddelta}$}
\label{sec.ad}
Given the consistency results above, we now establish the normality of the estimator \eqref{eq.emp.obj}.
Let us define 
    $\boldSigma_p := \mathrm{Var}_{p(\boldx)} \left[\boldf_p'(\boldx)\right]$, $ \boldSigma_q := \mathrm{Var}_{q(\boldx)} \left[r(\boldx;\bolddelta^*)\boldf'_q(\boldx)\right],
$
where $\mathrm{Var}_{p(\boldx)}\left[\boldg(\boldx)\right]$ is the covariance operator of a random vector $\boldg(\boldx)$ with respect to $p(\boldx)$.

\begin{thm}
\label{thm.asy}
If $\hat{\bolddelta} \overset{\mathbb{P}}{\to} \bolddelta^*$, 
$
    \sup_{\bolddelta\in \Delta_{n}} \|\nabla_\bolddelta^2 \ell(\bolddelta) - \boldSigma \| \overset{\mathbb{P}}{\to} 0,
$
then as $n_p \to \infty$, $n_q \to \infty$ and $0<\frac{n_p}{n_q} <\infty$,
\begin{align}
\label{eq.asym.norm.thm}
    \sqrt{n_p}\left[\bolddelta^* - \hat{\bolddelta}\right]
    \rightsquigarrow &\mathcal{N}\left[\boldzero, \boldSigma^{-1}\left(\boldSigma_{p} + \frac{n_p}{n_q}\cdot \boldSigma_{q}\right)\boldSigma^{-1}\right]. 
\end{align}
\end{thm}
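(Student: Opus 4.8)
The plan is to treat $\hat{\bolddelta}$ as a $Z$-estimator and linearize its score. Because $\hat{p}(\boldy_p)$ is a positive constant in $\bolddelta$, $\hat{\bolddelta}$ minimizes $\ell$ itself, and since $\ell$ is convex and twice continuously differentiable it obeys the first-order condition $\nabla_\bolddelta\ell(\hat{\bolddelta})=\boldzero$. Writing the difference of gradients with an exact integral remainder gives $\boldzero = \nabla_\bolddelta\ell(\bolddelta^*) + \bar{\boldH}_n\,(\hat{\bolddelta}-\bolddelta^*)$, where $\bar{\boldH}_n := \int_0^1 \nabla_\bolddelta^2\ell\big(\bolddelta^*+t(\hat{\bolddelta}-\bolddelta^*)\big)\,dt$; hence, whenever $\bar{\boldH}_n$ is invertible, $\sqrt{n_p}\,[\bolddelta^*-\hat{\bolddelta}] = \bar{\boldH}_n^{-1}\,\sqrt{n_p}\,\nabla_\bolddelta\ell(\bolddelta^*)$. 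The proof thus reduces to two claims: (i) $\bar{\boldH}_n\overset{\mathbb{P}}{\to}\boldSigma$, and (ii) $\sqrt{n_p}\,\nabla_\bolddelta\ell(\bolddelta^*)\rightsquigarrow\mathcal{N}\big(\boldzero,\ \boldSigma_p+\tfrac{n_p}{n_q}\boldSigma_q\big)$; combining them through Slutsky's theorem and the continuous mapping theorem applied to $\boldA\mapsto\boldA^{-1}$ (legitimate because $\boldSigma$ is positive definite, cf.\ Section~\ref{sec.eig.hessian}) yields the sandwich covariance $\boldSigma^{-1}(\boldSigma_p+\tfrac{n_p}{n_q}\boldSigma_q)\boldSigma^{-1}$.

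Claim (i) follows directly from the stated hypotheses: consistency, $\hat{\bolddelta}\overset{\mathbb{P}}{\to}\bolddelta^*$, places the whole segment joining $\bolddelta^*$ and $\hat{\bolddelta}$ inside the shrinking ball $\Delta_n$ with probability tending to one, so on that event $\|\bar{\boldH}_n-\boldSigma\|\le\sup_{\bolddelta\in\Delta_n}\|\nabla_\bolddelta^2\ell(\bolddelta)-\boldSigma\|\overset{\mathbb{P}}{\to}0$, which also makes $\bar{\boldH}_n$ invertible with probability tending to one.

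Claim (ii) is the heart of the argument. I would begin from the score decomposition used in the preceding proposition: with $\boldmu := \mathbbE_{p(\boldx)}[\boldf'_p(\boldx)] = \mathbbE_{q(\boldx)}[r(\boldx;\bolddelta^*)\boldf'_q(\boldx)]$ (the two expressions coincide by Proposition~\ref{prop.global.mini} and Bayes' rule), $-\nabla_\bolddelta\ell(\bolddelta^*) = \big(\tfrac{p(\boldy_p)}{\hat{p}(\boldy_p)}-1\big)\hat{\mathbbE}_{p(\boldx)}[\boldf'_p(\boldx)] + \big(\hat{\mathbbE}_{p(\boldx)}[\boldf'_p(\boldx)]-\boldmu\big) - \big(\hat{\mathbbE}_{q(\boldx)}[r_{n_q}(\boldx;\bolddelta^*)\boldf'_q(\boldx)]-\boldmu\big)$. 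The first two summands are measurable functions of $X_p$ and the third of $X_q$, so, $X_p$ and $X_q$ being independent, it suffices to prove a multivariate CLT for each block and add the resulting independent Gaussians. For the $X_p$ block one linearizes $\tfrac{p(\boldy_p)}{\hat{p}(\boldy_p)}$ around $1$ (using $\hat{p}(\boldy_p)\overset{\mathbb{P}}{\to}p(\boldy_p)>0$ and $\hat{\mathbbE}_{p(\boldx)}[\boldf'_p]\overset{\mathbb{P}}{\to}\boldmu$), turning the block into an i.i.d.\ average of a mean-zero random vector plus $o_p(1)$; boundedness of the likelihood and finiteness of the feature moments give the needed second moments, and the ordinary CLT applies with limiting covariance $\boldSigma_p$. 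For the $X_q$ block, $r_{n_q}(\,\cdot\,;\bolddelta^*)$ differs from $r(\,\cdot\,;\bolddelta^*)$ only through the empirical normalizing constant $\hat{\mathbbE}_{q(\boldx)}[l_q(\boldx)\exp\langle\bolddelta^*,\boldf(\boldx)\rangle]$, so the same delta-method linearization handles it; this is precisely the density-ratio-reweighted sample average studied for KLIEP, and an asymptotic-normality statement of the form $\sqrt{n_q}\big(\hat{\mathbbE}_{q(\boldx)}[r_{n_q}(\,\cdot\,;\bolddelta^*)\boldf'_q]-\boldmu\big)\rightsquigarrow\mathcal{N}(\boldzero,\boldSigma_q)$ can be obtained as in \citep{kim2019}. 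Writing $\sqrt{n_p}=\sqrt{n_p/n_q}\cdot\sqrt{n_q}$ and using $0<n_p/n_q<\infty$ (passing, for a single limit law, to a subsequence along which $n_p/n_q\to c\in(0,\infty)$, or reading the covariance as the $n$-indexed matrix in the statement), the $X_q$ block contributes $\tfrac{n_p}{n_q}\boldSigma_q$, and independence gives the sum.

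I expect the main obstacle to be the bookkeeping inside (ii) around the two random normalizers, $\hat{p}(\boldy_p)$ in the $X_p$ block and the empirical partition function hidden in $r_{n_q}$ in the $X_q$ block: one must verify that each enters the score only through a first-order (delta-method) term, that the limiting covariance of each block is exactly the claimed $\boldSigma_p$, respectively $\boldSigma_q$, and that no cross-covariance survives — which is where independence of $X_p$ and $X_q$ is essential. Claim (i), and the closing Slutsky / continuous-mapping assembly (together with the invertibility of $\boldSigma$ from Section~\ref{sec.eig.hessian}), are routine by comparison.
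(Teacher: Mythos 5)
Your proof takes essentially the same route as the paper's: linearize the first-order condition $\nabla_\bolddelta\ell(\hat{\bolddelta})=\boldzero$ at $\bolddelta^*$, replace the intermediate Hessian by $\boldSigma$ via the assumed uniform convergence and Slutsky/continuous-mapping, split $-\nabla_\bolddelta\ell(\bolddelta^*)$ into an $X_p$-block and an $X_q$-block, and apply the CLT to each block with independence delivering the sum $\boldSigma_p + (n_p/n_q)\boldSigma_q$ inside the sandwich. The only cosmetic difference is your exact integral remainder in place of the paper's coordinate-wise mean-value theorem; your explicit flag that the random normalizers ($\hat{p}(\boldy_p)$ and the empirical partition function inside $r_{n_q}$) must be absorbed by a delta-method step before the per-block covariances can be identified with $\boldSigma_p$ and $\boldSigma_q$ is, if anything, more careful than the paper's own proof, which passes from the first displayed equality to the ``$\overset{\mathbb{P}}{\to}$'' line without addressing those $O_p(n^{-1/2})$ normalizer fluctuations.
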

The proof can be found in supplementary materials. The conditions under which the uniform convergence holds have been studied in Section \ref{sec.eig.hessian}. 
In practice, we can use $\hat{\bolddelta}$ to replace ${\bolddelta}^*$ and use sample average to replace expectations when calculating the above asymptotic covariance.

\section{Lagrangian Dual  of \eqref{eq.emp.obj}}
In some cases, the direct computation of \eqref{eq.emp.obj} is infeasible: For example, when the feature function $\boldf$ is in an RKHS \citep{Scholkopf2001}, $\boldf(\boldx)$ may not be evaluated explicitly. Therefore, we study an alternative to \eqref{eq.emp.obj}.
Denote $\boldf'_{p_n} := \frac{l_p}{\phatp}$. 
We can consider the following bi-level optimization problem:
\begin{align*}
\textstyle
\hat{\bolddelta}_{\mathrm{dual}} := \argmin_{\bolddelta} E_{q_n}(\bolddelta) = \argmin_{\bolddelta} \|\hat{\mathbbE}_{q(\boldx)} \left[r_{n_q}(\boldx;\bolddelta)\boldf'_q(\boldx)\right]- \sum_{i=1}^{n_q} \hat{\mu}_i  \boldf(\boldx_q^{(i)})\|,
\end{align*}
\vspace*{-5mm}
\begin{align}
\label{eq.dual.2}
\textstyle
\hat{\boldmu}&:=\argmin_{\substack{ \boldmu\ge 0, \\
\sum_i \mu_i = 1}}  \sum_{i=1}^{n_q} \mu_i \log \mu_i - \mu_i \log l_q(\boldx_q^{(i)})
\text{ s.t. } \|\hat{\mathbbE}_{p(\boldx)} \left[\boldf'_{p_n}(\boldx)\right] - \sum_{i=1}^{n_q} \mu_i  \boldf(\boldx_q^{(i)}) \| \le R_n,
\end{align}
and $R_n$ is a constant dependent on $n_p \wedge n_q$. If $R_n = 0$, \eqref{eq.dual.2} is the \emph{Lagrangian dual} of $\eqref{eq.emp.obj}$. The proof can be found in the supplementary material. We show $\hat{\bolddelta}_\mathrm{dual}$ is also consistent.
\begin{thm}
	\label{thm.dual.consistency}
	Suppose Assumption \ref{ass.correctmodel} holds, $E_{q_n}(\hat{\bolddelta}_{\mathrm{dual}})\le C_0$, 
	\eqref{eq.optimgrad.converge} holds with probability at least $1-\epsilon_{R_1}$ and $\exists R_2>0, R_3\ge0, C_r > 1$,
	\begin{align}
	\label{eq.mineig.2}
	&\inf_{\bolddelta\in \Delta_{n}}\lambda_{\min}\left[\nabla^2_\bolddelta \ell(\bolddelta)\right]  > R_2,\Delta_{n}:= \mathrm{Ball}\left(\bolddelta^*,\frac{R_1+R_3}{R_2}\frac{C_r}{{\sqrt{n_p\wedge n_q}}} + \frac{C_0}{R_2}\right),
	\end{align} 
	with probability at least $\epsilon_{R_2}$,
	then $\exists R_n \le \frac{R_3}{\sqrt{n_p\wedge n_q}}$ such that $\left\|\bolddelta^* - \hat{\bolddelta}_{\mathrm{dual}}  \right\| \le \frac{R_1+R_3}{R_2}\frac{1}{\sqrt{n_p \wedge n_q}} + \frac{C_0}{{R_2}}$ with probability at least $ 1 -\epsilon_{R_1} - \epsilon_{R_2}$. 
\end{thm}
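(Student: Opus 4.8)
The plan is to follow the proof of Theorem~\ref{them.2} essentially verbatim; the only new point is that $\hat{\bolddelta}_{\mathrm{dual}}$ is not an exact stationary point of $\ell$ but has a controllably small gradient, which we read off from the bi-level structure in \eqref{eq.dual.2}. Throughout, set $R_n := R_3/\sqrt{n_p\wedge n_q}$ (so the constraint in \eqref{eq.dual.2} is exactly the displayed one), work on the event where both \eqref{eq.optimgrad.converge} and \eqref{eq.mineig.2} hold, which by a union bound has probability at least $1-\epsilon_{R_1}-\epsilon_{R_2}$, and note that Assumption~\ref{ass.correctmodel} is used only so that $\bolddelta^*$ is a well-defined anchor for the argument below.

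\emph{Step 1 (small gradient at $\hat{\bolddelta}_{\mathrm{dual}}$).} Differentiating \eqref{eq.emp.obj} gives $\nabla_\bolddelta\ell(\bolddelta) = -\hat{\mathbbE}_{p(\boldx)}\bigl[\boldf'_{p_n}(\boldx)\bigr] + \hat{\mathbbE}_{q(\boldx)}\bigl[r_{n_q}(\boldx;\bolddelta)\boldf'_q(\boldx)\bigr]$, and these are exactly the two quantities appearing in $E_{q_n}$ and in the feasibility constraint on $\hat{\boldmu}$. Inserting $\pm\sum_i\hat{\mu}_i\boldf(\boldx_q^{(i)})$ and applying the triangle inequality yields $\|\nabla_\bolddelta\ell(\hat{\bolddelta}_{\mathrm{dual}})\| \le E_{q_n}(\hat{\bolddelta}_{\mathrm{dual}}) + R_n \le C_0 + R_3/\sqrt{n_p\wedge n_q}$; on our event we also have $\|\nabla_\bolddelta\ell(\bolddelta^*)\| \le R_1/\sqrt{n_p\wedge n_q}$ from \eqref{eq.optimgrad.converge}.

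\emph{Step 2 (localization, $\hat{\bolddelta}_{\mathrm{dual}}\in\Delta_n$).} Suppose not. Let $\boldu$ be the unit vector in the direction $\hat{\bolddelta}_{\mathrm{dual}}-\bolddelta^*$ and let $\bolddelta_b$ be the point where the ray $\{\bolddelta^*+t\boldu:t\ge0\}$ meets $\partial\Delta_n$, so $\|\bolddelta_b-\bolddelta^*\|$ equals the radius of $\Delta_n$ and $[\bolddelta^*,\bolddelta_b]\subset\Delta_n$. Since \eqref{eq.emp.obj} is convex, $\nabla_\bolddelta\ell$ is monotone along this ray, hence $\langle\nabla_\bolddelta\ell(\hat{\bolddelta}_{\mathrm{dual}}),\boldu\rangle \ge \langle\nabla_\bolddelta\ell(\bolddelta_b),\boldu\rangle$. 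Writing $\nabla_\bolddelta\ell(\bolddelta_b)-\nabla_\bolddelta\ell(\bolddelta^*)=\int_0^1\nabla_\bolddelta^2\ell(\bolddelta^*+t(\bolddelta_b-\bolddelta^*))(\bolddelta_b-\bolddelta^*)\,dt$, invoking \eqref{eq.mineig.2} along $[\bolddelta^*,\bolddelta_b]$, and using $\|\nabla_\bolddelta\ell(\bolddelta^*)\|\le R_1/\sqrt{n_p\wedge n_q}$, we obtain $\langle\nabla_\bolddelta\ell(\bolddelta_b),\boldu\rangle \ge R_2\|\bolddelta_b-\bolddelta^*\| - R_1/\sqrt{n_p\wedge n_q} = (R_1+R_3)\tfrac{C_r}{\sqrt{n_p\wedge n_q}}+C_0-\tfrac{R_1}{\sqrt{n_p\wedge n_q}}$, which is strictly larger than $C_0 + R_3/\sqrt{n_p\wedge n_q}$ precisely because $C_r>1$; this contradicts Step~1, so $\hat{\bolddelta}_{\mathrm{dual}}\in\Delta_n$.

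\emph{Step 3 (conclusion).} Since $\Delta_n$ is convex, $[\bolddelta^*,\hat{\bolddelta}_{\mathrm{dual}}]\subset\Delta_n$, so the same integral representation of $\nabla_\bolddelta\ell$ together with \eqref{eq.mineig.2} gives $R_2\|\hat{\bolddelta}_{\mathrm{dual}}-\bolddelta^*\| \le \|\nabla_\bolddelta\ell(\hat{\bolddelta}_{\mathrm{dual}})-\nabla_\bolddelta\ell(\bolddelta^*)\| \le \|\nabla_\bolddelta\ell(\hat{\bolddelta}_{\mathrm{dual}})\|+\|\nabla_\bolddelta\ell(\bolddelta^*)\| \le C_0 + (R_1+R_3)/\sqrt{n_p\wedge n_q}$, and dividing by $R_2$ is the asserted bound. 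I expect Step~2 to be the main obstacle: only local strong convexity on $\Delta_n$ is available, so global convexity of \eqref{eq.emp.obj} must be invoked to prevent $\hat{\bolddelta}_{\mathrm{dual}}$ from escaping $\Delta_n$, and the bookkeeping linking $C_0$, $R_3$ and the slack factor $C_r$ has to be arranged so that the strict inequality collapses exactly to the hypothesis $C_r>1$.
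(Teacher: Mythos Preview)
Your proof is correct and uses the same ingredients as the paper: the gradient identity $\nabla_\bolddelta\ell(\bolddelta)=-\hat{\mathbbE}_{p(\boldx)}[\boldf'_{p_n}(\boldx)]+\hat{\mathbbE}_{q(\boldx)}[r_{n_q}(\boldx;\bolddelta)\boldf'_q(\boldx)]$, the feasibility bound $R_n$, the hypothesis $E_{q_n}(\hat{\bolddelta}_{\mathrm{dual}})\le C_0$, and the mean-value/eigenvalue step. The paper organizes the algebra slightly differently (it sandwiches $\|\hat{\mathbbE}_{q(\boldx)}[r^*_{n}\boldf'_q]-\sum_i\hat{\mu}_i\boldf(\boldx_q^{(i)})\|$ between an upper bound from the constraint plus $\|\nabla_\bolddelta\ell(\bolddelta^*)\|$ and a lower bound $\|\nabla_\bolddelta\ell(\bolddelta^*)-\nabla_\bolddelta\ell(\hat{\bolddelta}_{\mathrm{dual}})\|-E_{q_n}$), but this is just a rearrangement of your Step~1 and Step~3. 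The one substantive difference is localization: the paper imports the device of Theorem~\ref{them.2}, first adding the constraint $\bolddelta\in\Delta_n$, deriving the bound for the constrained optimizer, and then declaring the constraint inactive because $C_r>1$ puts the optimizer in the interior. Your Step~2 replaces this with a direct ray argument using global convexity of $\ell$ (monotonicity of $\nabla_\bolddelta\ell$ along the segment through $\bolddelta^*$ and $\hat{\bolddelta}_{\mathrm{dual}}$). Your version is arguably cleaner here, since $\hat{\bolddelta}_{\mathrm{dual}}$ is defined as a minimizer of $E_{q_n}$ rather than of $\ell$, so the ``constraint inactive $\Rightarrow$ unconstrained minimizer coincides'' step is not as automatic as in Theorem~\ref{them.2}; your contradiction argument sidesteps that issue entirely while still reducing to the same use of $C_r>1$.
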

The proof can be found in Section \ref{sec.proof.thm3} in the supplementary material. In our experiments, we find $E_{q_n}(\hat{\bolddelta}_{\mathrm{dual}})$ is usually around $10^{-7}$, close to the the tolerance of optimization software. Therefore, in practice, we can regard $C_0\approx 0$. 
Moreover, in this formulation,  $\boldf$ only occurs in the inner product thus the evaluation of $\eqref{eq.dual.2}$ can be evaluated using kernel trick when $\boldf$ is in RKHS.  However, an RKHS version of PRE will be a future work. 

\begin{wrapfigure}{r}{.33\textwidth}
    \vspace*{-17mm}
    \centering
    \includegraphics[width=.33\textwidth]{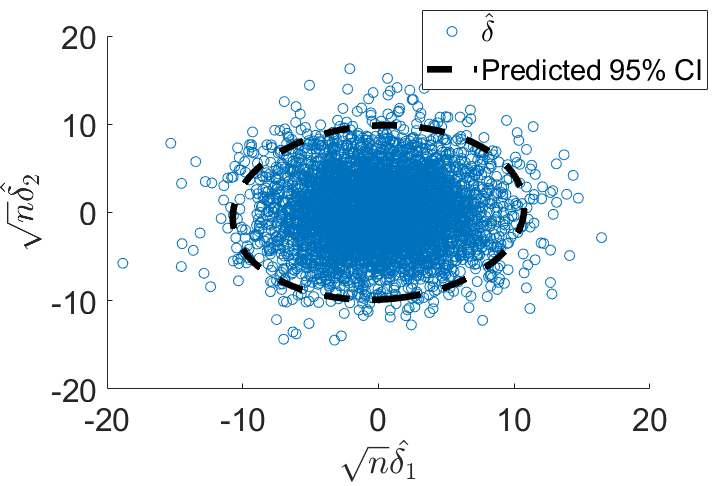}
    \includegraphics[width=.33\textwidth]{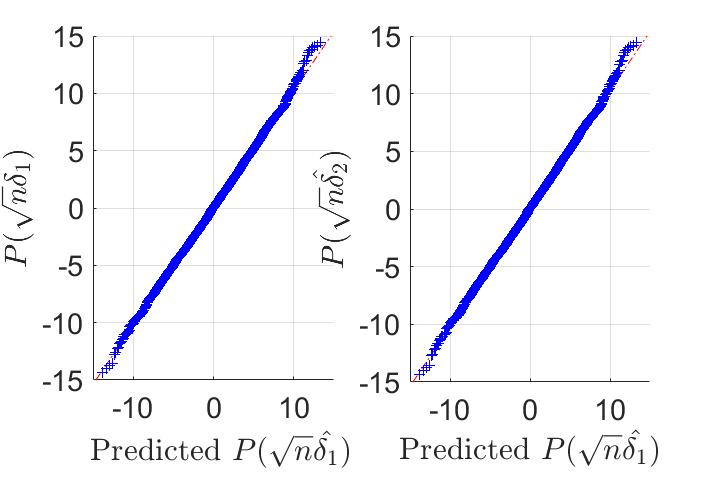}
    \caption{
    Simulation vs. Prediction.
    }
    \label{fig:qqplot}
    \vspace*{-7mm}
\end{wrapfigure}
\section{Numerical Experiments and Applications}
\subsection{Asymptotic Normality of $\hat{\bolddelta}$}
In this experiment, we validate the asymptotic distribution discussed in Section \ref{sec.ad} under a finite-sample setting. 

We generate two sets of 100 i.i.d. samples from  $\mathcal{N}(0.5,0.1^2)$ as $\boldy_p\in \mathbbR^{100}$ and $\boldy_q\in \mathbbR^{100}$ and set the likelihood function $l_p(\boldx) = l_q(\boldx) := \mathcal{N}([x_1\cdot \boldone, x_2 \cdot \boldone], 10^2\cdot \boldI),  \boldx \in \mathbbR^2$, where $\boldone$ is a 50-dimensional vector filled with ones and $\boldI$ is an identity matrix. 500 prior samples $X_p$ and $X_q$ are drawn from $p(\boldx) = q(\boldx) = \mathcal{N}([0,0],\boldI)$. $n_p=n_q=n$, $\boldf(\boldx) = \boldx$. Then
$\hat{\delta}$ is calculated using \eqref{eq.emp.obj} given $X_p, X_q, \boldy_p, \boldy_q, l_p$ and $l_q$. 
We run the estimation of $\hat{\delta}$ 5000 times with different random seeds and scatter-plot ${\sqrt{n} \hat{\delta}}$ in Figure \ref{fig:qqplot}. We also plot  $95\%$ confidence interval (CI) predicted by Theorem \ref{thm.asy} in black on the same plot. It can be seen that the overall distribution of $\sqrt{n}\hat{\bolddelta}$ fits the shape of CI well. 249 (4.98\%) simulations are outside of the confidence interval. 
We also compare the empirical distribution of $\hat{\delta}_1$ and $\hat{\delta}_2$ with 
their theoretical predictions on the quantile quantile plots (qq-plot) in Figure \ref{fig:qqplot}. 

Strictly speaking, in this experiment, $p(\boldx|\boldy_p) \neq q(\boldx|\boldy_q)$ since $\boldy_p \neq \boldy_q$. However, as we take 100 samples from the same distribution as $\boldy_p$ and $\boldy_q$, given this strong evidence, $\ell_p(\boldz) \approx \ell_q(\boldz)$. Thus $\bolddelta^* \approx \boldzero$. 
Both plots show that the prediction made by Theorem \ref{thm.asy} matches the simulations well.

\subsection{Application: Latent Signal Detection}
\label{sec.latent.detection}
Given a reference time-sequence $\boldy_p\in\mathbbR^d$ that is drawn from a ``background'' distribution $p(\boldy)$, we would like to know whether a testing sequence $\boldy_q\in\mathbbR^d$ is standing out from the background or not.
Moreover, 
$\boldy$ is usually a noisy version of a \emph{latent signal} $\boldx$ and we are more interested in detecting the latent signal. 
In change detection literature, a common strategy is checking if these two sequences are significantly different in terms of some \emph{distance measure}. 

For example, let $\boldy_p$ or $\boldy_q$ be the vibration of a car engine at a certain time. Though it is difficult to study $\boldy_p$ or $\boldy_q$ directly, 
its latent vibration pattern $\boldx$ can be understood by analyzing the mechanics of the engine. We can prepare simulations in two possible scenarios: ``normal state'', $p(\boldx)$ and ``damage state'', $q(\boldx)$. Our assumption is, if $\boldy_q$ is not a background signal, the posterior ratio $p(\boldx|\boldy_p)/q(\boldx|\boldy_
q)$ must deviate from 1. Therefore, we propose to use $\ell(\hat{\bolddelta})$ as the detection distance measure.

We first demonstrate this application using a numerical experiment. We introduce a simple random process $x_t = \alpha  x_{t-1} + \epsilon, \epsilon \sim \mathcal{N}(0,.1)$, $y_t = x_{t} + \epsilon', \epsilon \sim \mathcal{N}(0,.02)$. $\boldy_p \in \mathbbR^{50}$ is obtained by running this process fixing $\alpha = 0.5$ for 100 time steps then take $\boldy_p = [y_{51} \dots y_{100}]$. We generate $\boldy_q \in \mathbbR^{50}$ in the same way by fixing $\alpha = -0.2$. 
100,000 sequences from $p(\boldx)$
are simulated using the above process but with a random draw of $\alpha \sim \mathcal{N}(0.5,0.1^2)$. 100,000 
sequences from $q(\boldx)$ are generated using $\alpha \sim \mathcal{N}(0,.5^2)$. It means that we anticipate the latent signal $p$ will have a different auto-correlation coefficient compared to the background $q$. Note since we introduced an additional prior on $\alpha$, both $p(\boldx)$ and $q(\boldx)$ do not have tractable expressions. 

We generate 100 background  $\boldy_p$ and signal $\boldy_q$, then use $\ell(\hat{\bolddelta})$ for signal detection. 
$f(\boldx) = \mathrm{AR}(\boldx,20)$, which is the auto-correlation coefficient calculated from $\boldx$.
Considering our time-sequences are generated by state-space models,  we also compare with several other distance measures: \emph{auto-corr}: $\ell_2$ norm of the difference between AR(20) model parameters fitted on $\boldy_p$ and $\boldy_q$ by MLE, \emph{State-Space Transformation  (SST) \citep{Moskvina2003}}: a method specifically for detecting changes of state-space models.
The ROC curves of different distance measures are shown in Figure \ref{fig:latentchange}. The result shows, in terms of AUC, the PRE metric has a significant lead among all methods. 

\begin{figure}
    \centering
    \includegraphics[width=.3\textwidth]{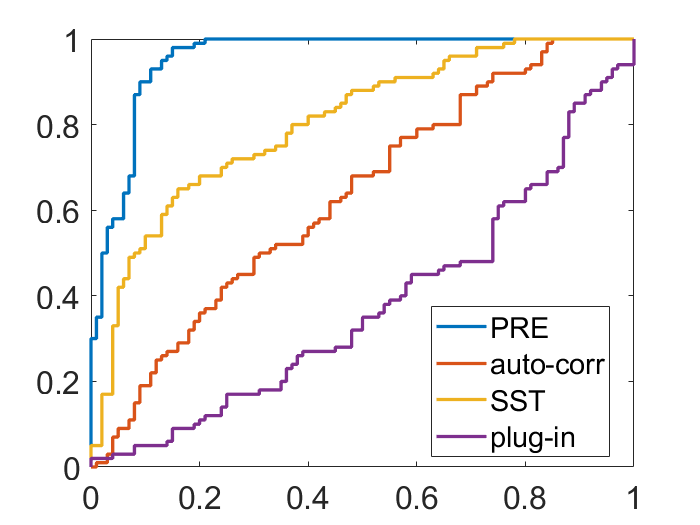}
    \includegraphics[width=.3\textwidth]{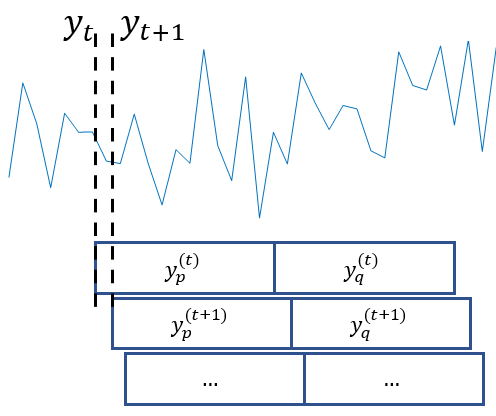}
    \includegraphics[width=.35\textwidth]{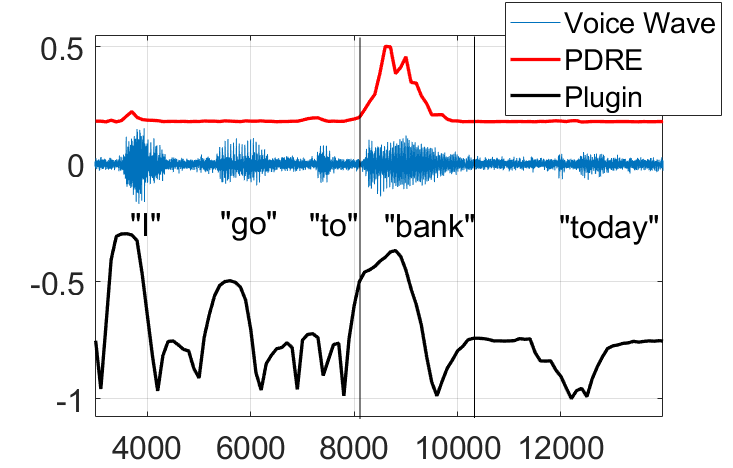}
    \caption{Left: ROC curves of latent signals detection from artificial time-sequences. Center: Illustration of a sliding window based time-series change detection. Right: Speech keyword detection. }
    \label{fig:latentchange}
\end{figure} 

Next, we show how this signal detection technique can be used in a time-series event detection problem. Suppose we have a time-series $y_t, t= 1\cdots T$. We can create $\boldy_p^{(t)}$ and $\boldy_q^{(t)}$ using two consecutive time windows (as shown in Figure \ref{fig:latentchange}). Using the above technique we can compute a metric $\ell_t(\hat{\boldtheta})$. By sliding two windows forward, we can compute $\ell_{t+1}$, $\ell_{t+2}$, etc. Now we can use $\ell_t$ as a running event detection score, to detect possible signals in time-series. We record a speech audio time-series in a noisy background, which says ``I go to bank today.'' We also record the speaker repeating the word ``bank'' in a quiet background for 10 times, then cut the the second speech into small sliding windows sized 500 to create $\boldx_p \in \mathbbR^{500} \sim p(\boldx)$. We let $\boldx_q \in \mathbbR^{500} \sim \mathcal{N}(0,0.01^2)$. The detection result is plotted in Figure \ref{fig:latentchange} where we can see the event-detection score clearly peaks at the word ``bank''. In comparison, the plugin estimator gives a much less clear detection score and does not seem to peak at the desired signal word ``bank''.

\subsection{Application: Extracting Locally Linear Logistic Classifier}
\label{sec.locallinear}
\begin{figure}
    \centering 
    \includegraphics[width=.32\textwidth]{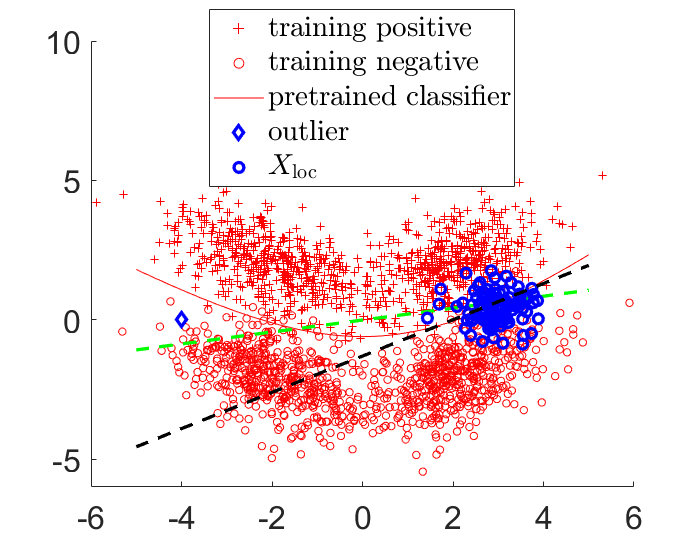}
    \includegraphics[width=.32\textwidth]{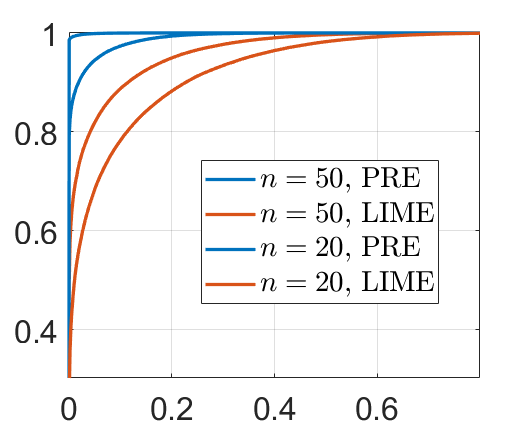} \includegraphics[width=.26\textwidth]{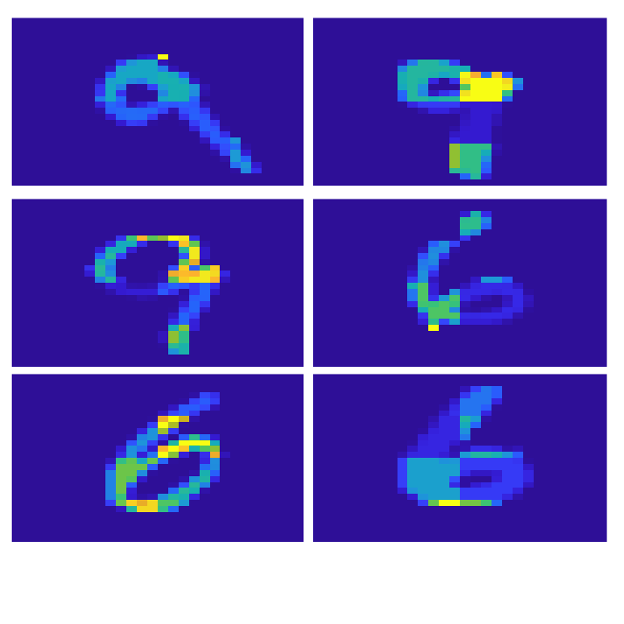}
    \caption{Left: Linear classification boundary extracted by PRE and LIME. Center: ROC curve of extracted classifiers on  $X_{\mathrm{loc}}$, Right: Interpretable features 
    identified by PRE for a few random $\boldx$}.
    \label{fig:localclassifier}
\end{figure} 

PRE can also be used to extract a locally linear classifier from a black-box non-linear probabilistic classifier. Formally speaking, given a black-box, non-linear classifier $p(y|\boldx)$, we would like to obtain a linear logistic classifier, i.e., $p(y|\boldx; \bolddelta, \delta_0) := \frac{1}{1+\exp(\bolddelta^\top\boldx + \delta_0)}$, such that within a local neighbourhood specified by a user, the linear classifier works similarly comparing to the original black-box classifier. This technique is crucial when the user demands the interpretability of a black-box decision. For simplicity, we focus on a binary classification problem in this section, i.e., $y\in \{-1,1\}$. 

Using Bayes rule and some algebra,  we can see $p(y|\boldx)= \frac{1}{1+\frac{p(\boldx|-1)}{p(\boldx|1)}\cdot c}, $ where $c$ is a constant that does not depend on $\boldx$. User usually specifies a local region by constructing a dataset $X_\mathrm{loc} = \{\boldx^{(i)}\}_{i=1}^n \sim p_{\mathrm{loc}}(\boldx)$ which contains perturbations of a data-point $\boldx_0$. 
Thus, extracting parameters of a linear logistic classifier becomes fitting the  $\log\frac{p(\boldx|-1)}{p(\boldx|1)}$ using a linear model, given $p(y|\boldx)$ and $X\sim p_\mathrm{loc}(\boldx)$ as inputs. This can be efficiently solved using PRE by letting $p(\boldy_p|\boldx) = p(-|\boldx)$, $q(\boldy_p|\boldx) = p(+|\boldx)$, $p(\boldx) = q(\boldx) = p_\mathrm{loc}(\boldx)$. Although estimating $\frac{p(\boldx|-1)}{p(\boldx|1)}$ using PRE seems to require the samples from the marginal $p(\boldx)$, not $p_{\mathrm{loc}}(\boldx)$, as $\frac{p(\boldx|-1)}{p(\boldx|1)} \propto \frac{p(-1|\boldx)p(\boldx)}{p(1|\boldx)p(\boldx)} = \frac{p(-1|\boldx)p_\mathrm{loc}(\boldx)}{p(1|\boldx)p_{\mathrm{loc}}(\boldx)}$, using samples from $p_{\mathrm{loc}}(\boldx)$ in PRE would also yield a consistent estimator of $\frac{p(\boldx|-1)}{p(\boldx|1)}$. 

Traditionally, such a local linear classifier is obtained by minimizing the discrepancy of the predictions between the pre-trained classifier $p(y|\boldx)$ and the local classifier $p(y|\boldx; \bolddelta, \delta_0)$ over $X_\mathrm{loc}$ (See, e.g., LIME, \citep{ribeiro2016should}). 
However, we show that such a process is not necessarily robust. See Figure \ref{fig:localclassifier} for an example: An outlier (blue diamond) is included in $X_{\mathrm{loc}}$. 
Linear classifier obtained by PRE (black dotted line) nicely recovers an accurate linear boundary around the majority samples of $X_\mathrm{loc}$. 
However, as the pretrained classifier labels the outlier as a negative example, LIME (green dotted line) tries to mimic such a prediction, resulting in a skewed classification boundary. 
Intuitively, this issue can be understood as that LIME estimates a sigmoid transform of the class density ratio rather than the ratio itself.
This may magnify the effect of outlier samples.

The extracted local classifier must also perform similarly to the original blackbox classifier on $X_\mathrm{loc}$. 
Figure \ref{fig:localclassifier} shows the ROC curves computed on the same dataset for two extracted classifiers. The ``true'' label in this case is defined as the predicted label by the black-box classifier. PRE classifier (red) achieves a significant lead comparing to LIME (blue) when $n = 20$ and $40$. 
Finally, we train a two layer neural network binary calssifier on MNIST dataset using digits ``6'' and ``9'' then extract a linear classifier. We generate $X_\mathrm{loc}$ by first picking an arbitrary image, then  randomly setting some of its super-pixels' value to 0. One image is divided into 7 $\times$ 7 super-pixels. The probability of each super-pixel turning black is a random coin flip. In this way, we obtain $X_\mathrm{loc}$ of size 1000. We use $\boldf(\boldx) = \boldx$ for PRE and the magnitude of $\hat{\bolddelta}$ for each picture is plotted in Figure \ref{fig:localclassifier}. It can be seen that the selected pixels (on which $|\delta_i| >0 $) are indeed parts of the written digits thus should be important features of the black-box classifier. 

\section{Conclusions}
We studied the problem of learning latent distribution changes via  posterior ratio estimation. 
We propose a novel method that directly estimates the posterior ratio from two observed datasets,
two likelihood functions and two sets of synthetic samples from priors. The proposed method is consistent and the estimated parameters have a limiting normal distribution. 
We consider two practical applications: Latent signal detection and local linear classifier extraction. Both experiments report promising results. 
\newpage
\bibliography{main}
\newpage

\section{The Dual Formulation of \eqref{eq.emp.obj}}

$\min_\bolddelta \ell(\bolddelta)$ can be re-written as 
\begin{align*}
    \min_{\bolddelta,t_i}     & -\hat{\mathbb{E}}_{p(\boldx)} \left[ \frac{l_p(\boldx)}{\phatp} \cdot \langle \bolddelta, \boldf(\boldx) \rangle \right] 
    + \log \hat{\mathbbE}_{q(\boldx)} \left[l_q(\boldx) \exp(t)\right], \\
    &\text{subject to }
    \forall i \in \{1\dots n_q\}, t_i =  \langle \bolddelta, \boldf(\boldx_q^{(i)}) \rangle.
\end{align*}
Write the Lagrangian of the above convex optimization, 
\begin{align*}
    \max_{\mu_i}\min_{\bolddelta,t_i}     & -\hat{\mathbb{E}}_{p(\boldx)} \left[ \frac{l_p(\boldx)}{\hat{p}(\boldy_p)} \cdot \langle \bolddelta, \boldf(\boldx) \rangle \right] 
    + \log \hat{\mathbbE}_{q(\boldx)} \left[l_q(\boldx)\exp(t)\right] - \sum_{i=1}^{n_q} \mu_i \left[t_i -  \langle \bolddelta, \boldf(\boldx_q^{(i)}) \rangle \right],
\end{align*}
where $\mu_i$ are Lagrangian multipliers. First, we solve above minimization with respect to $t_i$ (by setting the derivative to zero), and it can be seen that the optimum is attained when $\mu_i = \frac{l_q\left(\boldx_q^{(i)}\right) \exp(t_i) }{\hat{\mathbbE}_{q_{\boldx}}\left[l_q\left(\boldx\right)\exp(t)\right]}$, $\sum_{i=1}^{n_q} \mu_i = 1, \mu_i \ge 0$. 
Substituting above optimality condition to the objective function, we obtain 
\[\max_{\substack{\mu_i\ge 0, \\\sum_{i=1}^{n_q} \mu_i  = 1 }}\min_{\bolddelta} -\hat{\mathbb{E}}_{p(\boldx)} \left[ \frac{l_p(\boldx)}{\hat{p}(\boldy_p)} \cdot \langle \bolddelta, \boldf(\boldx) \rangle \right] + \sum_{i=1}^{n_q} -\mu_i \log(\mu_i)  +  \mu_i \log l_q(\boldx_q^{(i)}) + \mu_i \langle \bolddelta, \boldf(\boldx_q^{(i)}) \rangle,  \]

Second, we solve the minimization with respect to $\bolddelta$, one can derive the optimality condition $\hat{\mathbb{E}}_{p(\boldx)} \left[ \frac{l_p(\boldx)}{\phatp} \cdot \boldf(\boldx) \right] =  \sum_{i=1}^{n_q} \mu_i  \boldf(\boldx_q^{(i)}) $. Substituting the above optimality condition, we obtain the objective function
\begin{align*}
    \max_{\mu_i\ge 0, \sum_{i=1}^{n_q} \mu_i  = 1 } &\sum_{i=1}^{n_q} -\mu_i \log(\mu_i) +  \mu_i \log l_q(\boldx_q^{(i)}), \\
    \text{subject to: } &
    \hat{\mathbb{E}}_{p(\boldx)} \left[ \frac{l_p(\boldx)}{\phatp} \cdot  \boldf(\boldx) \right] =  \sum_{i=1}^{n_q} \mu_i   \boldf(\boldx_q^{(i)}).
\end{align*}
The above objective function is equivalent to \eqref{eq.dual.2} when $R_n = 0$.
\newpage
\section{Proof of Theorem \ref{them.2}}

\begin{proof}
Let us begin by considering the following constrainted optimization problem:
\begin{align}
    \label{eq.emp.obj.con}
    \check{\bolddelta} := \argmin_{\bolddelta\in \Delta_{n}} \ell(\bolddelta).
\end{align}

The Lagrangian of \eqref{eq.emp.obj.con} is: 
$   \mathrm{Lag} (\mu,\bolddelta) := \ell(\bolddelta) + \mu \cdot \|\bolddelta^* - \bolddelta\| - \mu \cdot \frac{C_r}{{\sqrt{n_p\wedge n_q}}}.
$
The KKT condition states that the minimizer $\check{\bolddelta}$ must satisfy 
$\boldzero = \nabla_\bolddelta \ell(\check{\bolddelta}) + \hat{\mu} \cdot \frac{\bolddelta^* - \check{\bolddelta}}{\|\bolddelta^* - \check{\bolddelta}\|}$ for some $\hat{\mu} \ge 0$. Rewrite this equality by expanding $\nabla_\bolddelta \ell(\check{\bolddelta})$  at $\bolddelta^*$ using mean value theorem in a coordinate-wise fashion: 
\[\boldzero = \nabla_\bolddelta \ell(\bolddelta^*) +\left[\nabla^2_\bolddelta\ell(\bar{\bolddelta}) + \frac{\hat{\mu}}{\|\bolddelta^* - \check{\bolddelta}\|}\boldI \right] \left[\bolddelta^* - \check{\bolddelta}\right],\]
where $\bar{\delta}_i := h\hat{\delta}_i + (1-h)\delta^*_i$ for some $h\in[0,1]$ which implies $(\bar{\delta}_i - \delta_i^*)^2 = h^2(\hat{\delta}_i - \delta_i^*)^2 \le (\hat{\delta}_i - \delta_i^*)^2$. Therefore, $\bar{\bolddelta} \in \Delta_{n}$. 

After some algebra and using the fact that $\bar{\bolddelta} \in \Delta_{n}$ we get 
\begin{align*}
    \|\bolddelta^* - \check{\bolddelta}\| &= \left\|\left[\nabla^2_\bolddelta\ell(\bar{\bolddelta}) + \frac{\hat{\mu}}{\|\bolddelta^* - \check{\bolddelta}\|}\boldI \right]^{-1} \nabla_\bolddelta \ell(\bolddelta^*)\right\| \\
    &< \left\|\left[\nabla^2_\bolddelta\ell(\bar{\bolddelta}) \right]^{-1}\right\|\cdot \|\nabla_\bolddelta \ell(\bolddelta^*)\|\le \frac{1}{R_2}\cdot\|\nabla_\bolddelta \ell(\bolddelta^*)\|
    \le \frac{R_1}{R_2} \cdot \frac{1}{\sqrt{n_p\wedge n_q}}
\end{align*}
where the first inequality is due to Holder's inequality and the second inequality first uses an equality $\|\boldA^{-1}\| = 1/\lambda_\mathrm{min}(A), $ if $\boldA$ is p.s.d., then applies inequality \eqref{eq.mineig}. The last inequality is due to \eqref{eq.optimgrad.converge}. 

\eqref{eq.optimgrad.converge} holds with probability at least 1-$\epsilon_{R_1}$ and \eqref{eq.mineig} holds with probability at least 1-$\epsilon_{R_2}$. A union bound shows the above inequality holds with probability at least $1-\epsilon_{R_1} - \epsilon_{R_2}$. 

Since the ball constraint has radius $\frac{R_1}{R_2} \cdot\frac{C_r}{\sqrt{n_p\wedge n_q}}$ and $C_r>1$, we conclude the minimizer $\check{\bolddelta}$ is in the interior of $\Delta_{n}$ with probability at least $1-\epsilon_{R_1} - \epsilon_{R_2}$.
The slackness condition  states, $\exists \hat{\mu}\ge 0, \hat{\mu}\cdot\left(\|\bolddelta^* - \check{\bolddelta}\| - \frac{R_1}{R_2} \cdot\frac{C_r}{\sqrt{n_p\wedge n_q}}\right) = 0$. Knowing $\check{\bolddelta}$ is in the interior of $\Delta_{n}$, we can deduce $\hat{\mu} = 0$, which means the constraint $\bolddelta \in \Delta_{n}$ is inactive, so $\hat{\bolddelta}=\check{\bolddelta}$ 
with probability at least $1-\epsilon_{R_1} - \epsilon_{R_2}$.

The fact that $\check{\bolddelta}$ is in the interior of a $\bolddelta^*$-centered ball whose radius shrinks at the rate $\frac{1}{\sqrt{n_p\wedge n_q}}$ allows us to conclude that $\check{\bolddelta}  \overset{\mathbb{P}}{\to} \bolddelta^*$. Finally $\hat{\bolddelta}  \overset{\mathbb{P}}{\to} \bolddelta^*$ due to $\hat{\bolddelta} = \check{\bolddelta}$ with probability at least $1-\epsilon_{R_1} - \epsilon_{R_2}$ as discussed above. 
\end{proof}
\newpage
\section{Proof of Theorem \ref{thm.asy}}
\begin{proof}
This theorem is the consequence of the estimation equation of \eqref{eq.emp.obj}:
\[\boldzero = \nabla_\bolddelta \ell(\hat{\bolddelta}) = \nabla_\bolddelta \ell(\bolddelta^*) +\nabla^2_\bolddelta\ell(\bar{\bolddelta}) \left[\bolddelta^* - \hat{\bolddelta}\right].\]
where the rightmost equality is due to row-wise Mean Value Theorem.
Thus, 
\begin{align*}
    -\nabla_\bolddelta \ell(\bolddelta^*) &= \nabla^2_\bolddelta\ell(\bar{\bolddelta})  \left[\bolddelta^* - \hat{\bolddelta}\right] 
    =\left[\boldSigma + o_p(1)\right]{\left[\bolddelta^* - \hat{\bolddelta}\right]}, 
\end{align*}
where the last equality is due to the uniform convergence of the Hessian assumed in our theorem and Continuous Mapping Theorem.

Therefore 
\begin{align}
    \label{eq.asy.norm}
    \left[\hat{\bolddelta} - \bolddelta^*\right]=\left[\boldSigma + o_p(1)\right]^{-1}\nabla_\bolddelta \ell(\bolddelta^*).
\end{align}

Expanding  $\nabla_\bolddelta\ell(\bolddelta^*)$ and multiply $\sqrt{n_q}$, 
\begin{align}
\label{eq.normality}
    -\sqrt{n_p}\nabla_\bolddelta\ell(\bolddelta^*)  = &  
     \sqrt{n_p}\left\{ \left\{1 + \frac{\left[p(\boldy_p) - \phatp\right]}{\phatp}\right\}\hat{\mathbbE}_{p(\boldx)} \left[\boldf'_p(\boldx)\right] - \mathbbE_{p(\boldx)} \left[\boldf'_p(\boldx)\right]\right\} \notag \\
     +& 
     \sqrt{n_q}\cdot \frac{\sqrt{n_p}}{\sqrt{n_q}}\left\{\mathbbE_{q(\boldx)}\left[r(\boldx;\bolddelta^*) \boldf'_q(\boldx)\right] 
    - \hat{\mathbbE}_{q(\boldx)}\left[r(\boldx;\bolddelta^*)\boldf'_q(\boldx)\right]\right\}\notag \\
    \overset{\mathbb{P}}{\to}&     \sqrt{n_p}\left\{ \hat{\mathbbE}_{p(\boldx)}  \left[\underbrace{\boldf'_p(\boldx) - \mathbbE_{p(\boldx)} \left[\boldf'_p(\boldx)\right]}_{\bolda} \right] \right\} \notag \\
     -& 
     \sqrt{n_q}\cdot \frac{\sqrt{n_p}}{\sqrt{n_q}}\left\{ 
    \hat{\mathbbE}_{q(\boldx)}\left[\underbrace{r(\boldx;\bolddelta^*)\boldf'_q(\boldx)- \mathbbE_{q(\boldx)}\left[r(\boldx;\bolddelta^*) \boldf'_q(\boldx)\right]}_{\boldb}\right]\right\}. 
\end{align}

The first equality is due to $\mathbbE_{p(\boldx)} \left[\boldf'_p(\boldx)\right] = \mathbbE_{q(\boldx)}\left[r(\boldx;\bolddelta^*) \boldf'_q(\boldx)\right]$, for the optimal parameter $\bolddelta^*$.

Since $\boldy_p, \boldy_q$ and $\boldx_p, \boldx_q$ are independent of each other, $\bolda$ and $ \boldb$ are two \emph{independent} zero-mean random variables with covariance $\boldSigma_p$ and $\frac{n_p}{n_q}\boldSigma_q $ respectively. 

Applying CLT on \eqref{eq.normality} yields $-\sqrt{n_p}\nabla_\bolddelta\ell(\bolddelta^*)\rightsquigarrow \mathcal{N}(\boldzero, \boldSigma_p + \frac{n_p}{n_q}\boldSigma_q)$. Finally, \eqref{eq.asy.norm} indicates that we should left and right multiply the covariance by $\boldSigma^{-1}$ to obtain covariance of $\sqrt{n_p}\cdot (\bolddelta^* - \hat{\bolddelta})$. This yields desired results in \eqref{eq.asym.norm.thm}.
\end{proof}

\newpage

\section{Proof of Theorem \ref{thm.dual.consistency}}
\label{sec.proof.thm3}
Similar to Theorem \ref{them.2}, we consider optimizing $\bolddelta$ with a constraint $\bolddelta \in \Delta_{n}$. 

First, due to the optimization constraint in \eqref{eq.dual.2} we know:
\begin{align*}
    \frac{R_3}{\sqrt{n_p \wedge n_q}} \ge R_n\ge&\left\|\hat{\mathbbE}_{p(\boldx)} \left[\boldf_{p_n}'(\boldx)\right] - \sum_{i=1}^{n_q} \hat{\mu}_i  \boldf(\boldx^{(i)}_q) \right\|\\
    =& \left\|\hat{\mathbbE}_{p(\boldx)} \left[\boldf_{p_n}'(\boldx)\right] -\hat{\mathbbE}_{q(\boldx)} \left[ r^*_{n}\boldf'_q(\boldx)\right] +  \hat{\mathbbE}_{q(\boldx)} \left[ r^*_{n}\boldf'_q(\boldx)\right] - \sum_{i=1}^{n_q} \hat{\mu}_i  \boldf(\boldx_q^{(i)}) \right\|\\
    \ge& -\left\|\hat{\mathbbE}_{p(\boldx)} \left[\boldf_{p_n}'(\boldx)\right] - \hat{\mathbbE}_{q(\boldx)} \left[r^*_{n}\boldf'_q(\boldx)\right] \right\| + \left\| \hat{\mathbbE}_{q(\boldx)} \left[r^*_{n}\boldf'_q(\boldx)\right] - \sum_{i=1}^{n_q} \hat{\mu}_i  \boldf(\boldx_q^{(i)}) \right\|\\
    =& -\left\|\nabla_\bolddelta \ell(\bolddelta^*)\right\| +\left\| \hat{\mathbbE}_{q(\boldx)} \left[r^*_{n}\boldf'_q(\boldx)\right] - \sum_{i=1}^{n_q} \hat{\mu}_i  \boldf(\boldx_q^{(i)})\right\|, 
\end{align*}
where $r_{n_q}(\boldx;\bolddelta^*)$ is shortened as $r^*_n$.
As $\|\nabla_\bolddelta \ell(\bolddelta^*)\| \le \frac{R_1}{\sqrt{n_p \wedge n_q}}$ is assumed with probability at least $1 - \epsilon_{R_1}$, we can see 
\begin{align}
\label{eq.dual.half}
    \left\|\hat{\mathbbE}_{q(\boldx)} \left[ r_{n}^*\boldf_{q}'(\boldx)\right] - \sum_{i=1}^{n_q} \hat{\mu}_i  \boldf(\boldx_q^{(i)})\right\| \le \frac{R_1+R_3}{\sqrt{n_p \wedge n_q}}, \text{  with probability at least $1 - \epsilon_{R_1}$}.
\end{align}
Moreover, let $\hat{r}_{n_q} := r_{n_q}(\boldx;\hat{\bolddelta}_{\mathrm{dual}})$, 
\begin{align*}
    &\left\|\hat{\mathbbE}_{q(\boldx)} \left[ r^*_{n}\boldf'_q(\boldx)\right] - \sum_{i=1}^{n_q} \hat{\mu}_i  \boldf(\boldx_q^{(i)})\right\| \\
    =& 
    \left\|\hat{\mathbbE}_{q(\boldx)} \left[ r^*_{n}\boldf'_q(\boldx)\right] - \hat{\mathbbE}_{q(\boldx)} \left[ \hat{r}_{n_q}\boldf'_q(\boldx)\right] + \hat{\mathbbE}_{q(\boldx)} \left[\hat{r}_{n_q}\boldf'_q(\boldx)\right]- \sum_{i=1}^{n_q} \hat{\mu}_i  \boldf(\boldx_q^{(i)})\right\|\\
    \ge &\left\| \hat{\mathbbE}_{q(\boldx)} \left[r^*_{n}\boldf'_q(\boldx)\right] -  \hat{\mathbbE}_{q(\boldx)} \left[\hat{r}_{n_q}\boldf'_q(\boldx)\right] \right\| - \left\| \hat{\mathbbE}_{q(\boldx)} \left[\hat{r}_{n_q}\boldf'_q(\boldx)\right]- \sum_{i=1}^{n_q} \hat{\mu}_i  \boldf(\boldx_q^{(i)})\right\|\\
    = &\left\|\hat{\mathbbE}_{q(\boldx)} \left[r^*_{n}\boldf'_q(\boldx)\right] - \hat{\mathbbE}_{q(\boldx)} \left[\hat{r}_{n_q}\boldf'_q(\boldx)\right] \right\| - E_{q_n}(\hat{\bolddelta}_{\mathrm{dual}})  \\
    = &\left\|\hat{\mathbbE}_{q(\boldx)} \left[ r^*_{n}\boldf'_q(\boldx)\right] - \hat{\mathbbE}_{p(\boldx)} \left[\boldf_{p_n}'(\boldx)\right] + \hat{\mathbbE}_{p(\boldx)} \left[\boldf_{p_n}'(\boldx)\right] - \hat{\mathbbE}_{q(\boldx)} \left[\hat{r}_{n_q}\boldf'_q(\boldx)\right] \right\| - E_{q_n}(\hat{\bolddelta}_{\mathrm{dual}})  \\
    = &\left\| \nabla_\bolddelta \ell(\bolddelta^*) - \nabla_\bolddelta \ell(\hat{\bolddelta}_{\mathrm{dual}})  \right\| - E_{q_n}(\hat{\bolddelta}_{\mathrm{dual}}) \\
    = &\left\| \nabla^2_\bolddelta \ell(\bar{\bolddelta})\left[\bolddelta^* - \hat{\bolddelta}_{\mathrm{dual}}\right]  \right\| - E_{q_n}(\hat{\bolddelta}_{\mathrm{dual}})
    \le R_2\left\| \bolddelta^* - \hat{\bolddelta}_{\mathrm{dual}}  \right\| - E_{q_n}(\hat{\bolddelta}_{\mathrm{dual}})  \text{ with probability at least } 1 - \epsilon_{R_2}, 
\end{align*}
where $\bar{\bolddelta}$ is a parameter that is in between $\hat{\bolddelta}$ and $\bolddelta^*$ in a coordinate-wise fashion. 

Combining the above inequality with \eqref{eq.dual.half} we obtain:
$\left\|\bolddelta^* - \hat{\bolddelta}_{\mathrm{dual}}  \right\| \le \frac{R_1+R_3}{R_2}\frac{1}{\sqrt{n_p \wedge n_q}} + \frac{C_0}{{R_2}}$ with probability at least $1 - \epsilon_{R_1} - \epsilon_{R_2}$ (union bound).
Since the $\Delta_n$ has a radius $\frac{R_1+R_3}{R_2}\frac{C_r}{\sqrt{n_p \wedge n_q}} + \frac{C_0}{R_2}$ where $C_r>1$, $ \hat{\bolddelta}_{\mathrm{dual}} $ is at the interior of $\Delta_{n}$. This means that even without the constraint $\bolddelta \in \Delta_n$, $\left\|\bolddelta^* - \hat{\bolddelta}_{\mathrm{dual}}  \right\| \le \frac{R_1+R_3}{R_2}\frac{1}{\sqrt{n_p \wedge n_q}} + \frac{C_0}{{R_2}}$ with probability at least $1 - \epsilon_{R_1} - \epsilon_{R_2}$.
\newpage

\subsection{Conditions for Consistency}
\label{sec.num.consistent}
In this experiment, we numerically evaluate the conditions required by Theorem \ref{them.2} and explore possible settings of the constants in this theorem. 

We let $\boldy_p = \boldy_q = [0,0]$, $l_p = l_q = \mathcal{N}_X(\boldx, \boldI), \boldx \in \mathbb{R}^2$ and $X_p, X_q$ be independently samples from a 2 dimensional standard normal distribution. $n_p = n_q = n$. 
We use $\boldf(\boldx) = \boldx$ in our posterior ratio model and 
it can be seen that $\bolddelta^* = \boldzero$ by our construction of the dataset.  

We first investigate the minimum eigenvalue condition on $\hessianLL$. 
In the left plot of Figure \ref{fig:consis.conditions}, we plot $\min_{\bolddelta \in \Delta_{n}} \lambda_\mathrm{min}\left[\nabla_\bolddelta^2\ell(\bolddelta)\right]$ (which is obtained by numerical optimization) as a red solid line. Here $\Delta_{n} :=  \mathrm{Ball}\left(\boldzero, \frac{10}{\sqrt{n}}\right)$ and the error bar indicates the standard deviation of 50 runs. It can be seen that when $n \ge 400$, $\min_{\bolddelta \in \Delta_{n}} \lambda_\mathrm{min}\left[\nabla_\bolddelta^2\ell(\bolddelta)\right]$ minus two standard deviation is safely bounded by 0.18 from below. 

We also investigate the tightness a lower-bound obtained from Wyel's inequality (see \eqref{eq.lower.hessian}) which is marked by a blue dash line. As we can see, when $n \ge 400$, Wyel's inequality becomes very close to $\min_{\bolddelta \in \Delta_{n}} \lambda_\mathrm{min}\left[\nabla_\bolddelta^2\ell(\bolddelta)\right]$ and consequently $\lambda_{\min}\left[\hessianLLstar\right]$ (black dash line) becomes a good approximation to $\min_{\bolddelta \in \Delta_{n}} \lambda_\mathrm{min}\left[\nabla_\bolddelta^2\ell(\bolddelta)\right]$. 

Now we explore some possible settings of the constants in Theorem \ref{them.2}. 
It can be seen from Theorem \ref{them.2} that $R_1$ and $R_2$ determine $\epsilon_{R_1}$ and $\epsilon_{R_2}$. To make probability 1 - $\epsilon_{R_1} - \epsilon_{R_2}$ as large as possible, we should make $R_1$ large but $R_2$ small. However, if the ratio $R_1/R_2$ is large, $\Delta_n$ would be large too ($C_r>1$) and condition \eqref{eq.mineig} may struggle to hold. To investigate the possible settings of $R_1$ and $R_2$, we plot $\frac{\|\sqrt{n}\nabla_\bolddelta \ell(\bolddelta^*)\|}{\inf_{\bolddelta\in\Delta_n} \lambda_{\min}\left[\hessianLL\right]}$ in the right plot of Figure \ref{fig:consis.conditions}, where $\Delta_n$ has a radius $10/\sqrt{n}$. The plot is generated from 500 repetitions with different draws of samples from $p(\boldx), q(\boldx)$. Error bars indicate standard deviations. We choose $C_r = 1.5$. 

Since the ratio $\frac{\|\sqrt{n}\nabla_\bolddelta \ell(\bolddelta^*)\|}{\inf_{\bolddelta\in\Delta_n} \lambda_{\min}\left[\hessianLL\right]} \le \frac{R_1}{R_2}= 10/C_r$, \emph{6.66 is the upper limit of } $\frac{\|\sqrt{n}\nabla_\bolddelta \ell(\bolddelta^*)\|}{\inf_{\bolddelta\in\Delta_n}}$. 
The plot shows, this requirement is comfortably met with some room to spare. 
We can see that the averaged ratio plus two standard deviations is nicely bounded by $10/C_r$ from above.

We can also see from this plot that $\frac{\|\sqrt{n}\nabla_\bolddelta \ell(\bolddelta^*)\|}{ \lambda_{\min}\left[\hessianLLstar\right]}$ becomes a nice approximation to $\frac{\|\sqrt{n}\nabla_\bolddelta \ell(\bolddelta^*)\|}{\inf_{\bolddelta\in\Delta_n} \lambda_{\min}\left[\hessianLL\right]}$ when $n$ is larger than 400, which aligns with our expectation in Section \ref{sec.eig.hessian} that $ \boldSigma \approx \hessianLLstar \approx  \hessianLL, \forall \bolddelta \in \Delta_n$ when $n$ is large. 

\begin{figure}
	\centering
	\includegraphics[width=.4\textwidth]{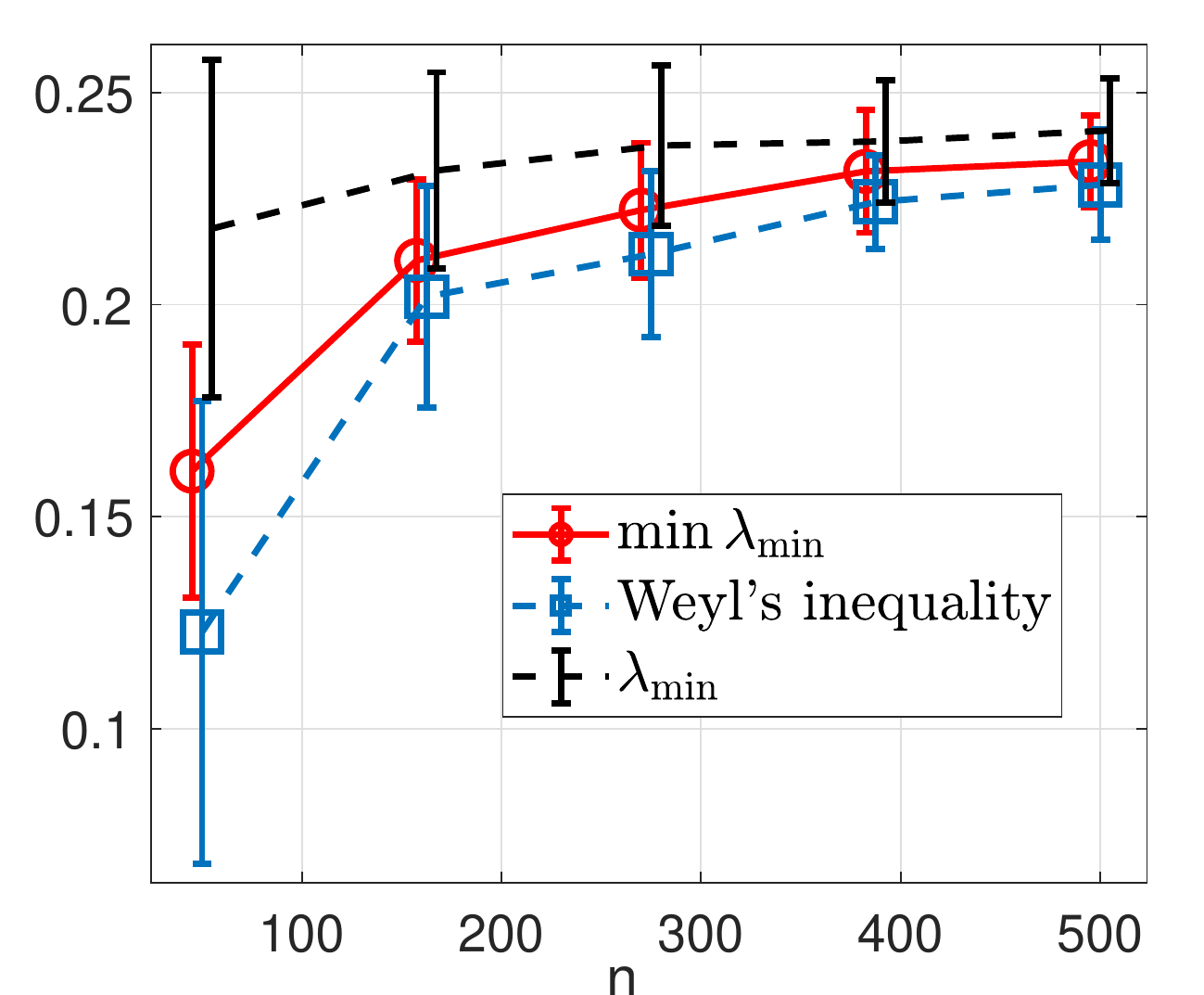}
	\includegraphics[width=.4\textwidth]{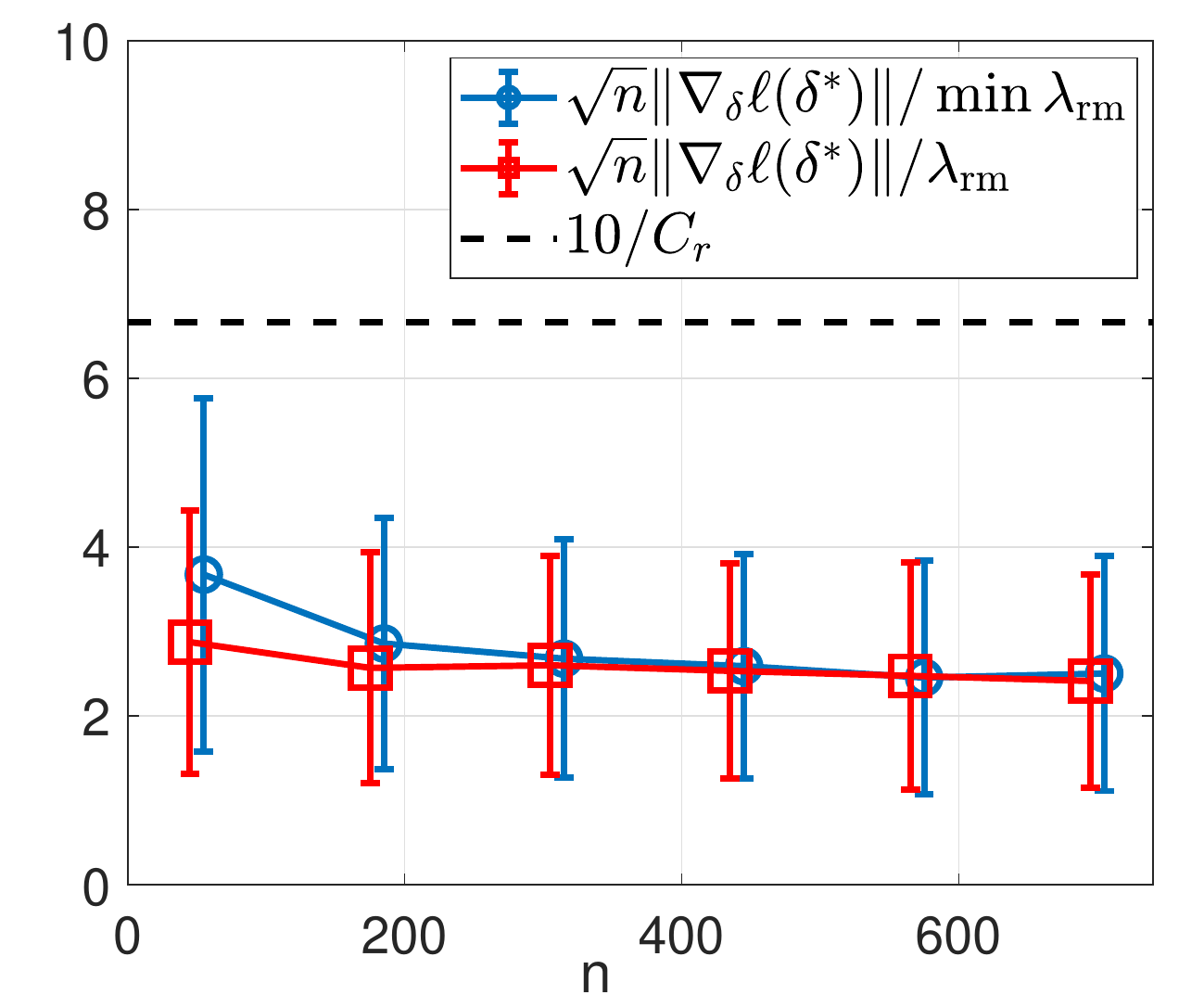}
	\caption{Numerical evaluation of Theorem \ref{them.2}'s conditions}
	\label{fig:consis.conditions}
\end{figure}

\newpage
\section{Additional Information on Latent Signal Detection}
In Section \ref{sec.latent.detection}, the state-space data experiment, we chose to use normal distribution as our likelihood function: $l_p(\boldx) := \mathcal{N}_{\boldy_p}(\boldx, \boldI)$ and $l_q(\boldx) := \mathcal{N}_{\boldy_q}(\boldx, \boldI)$. $\boldf(\boldx) := \mathrm{AR}(\boldx,20)$, where $\mathrm{AR}$ stands for the MALTAB ``autocorr'' function. In the speech keyword detection experiment, we used likelihood functions $l_p(\boldx) := \mathcal{N}_{\boldy_p}(\boldx, 0.18\cdot \boldI)$ and $l_q(\boldx) := \mathcal{N}_{\boldy_q}(\boldx, 0.18\cdot \boldI)$. $\boldf(\boldx) := [x_1x_2, x_2x_3, \dots, x_{d-1}x_{d}]$, i.e. the Lag-1 autocorrelation feature. 

The State-Space Transformation (SST) method \citep{Moskvina2003} produces a discrepancy measure by evaluating two trajectory matrices of two time-series signals using singular spectrum analysis. We use the first 20 eigenvectors to compare the difference between two spectrum. The number ``20'' was confirmed to be a reasonable choice in our preliminary experiments.

\section{Additional Information on Locally Linear Classifier Extraction}
In Section \ref{sec.locallinear}, we  estimated $\frac{p(\boldx|-1)}{p(\boldx|1)}$. However, if one wanted to produce a decision boundary that is plotted on the left, Figure \ref{fig:localclassifier}, one would need to know the constant $c$. In our experiment, we estimated such a constant using 400 data points in the original labelled dataset near $X_\mathrm{loc}$. In practice, this is not possible as we cannot possibly know any labelled data points near $X_{\mathrm{loc}}$. However, we argue that knowing $c$ is not relevant to our task, which is identifying useful features of a local linear classifier. Such features can be efficiently identified by looking at the magnitude of $|\hat{\delta}_i|$, which was what we did in the MNIST digits experiment.

\end{document}